\title{Structure learning in polynomial time: Greedy algorithms, Bregman information, and exponential families}
\author{%
	Goutham Rajendran\\
	University of Chicago\\
	\texttt{goutham@uchicago.edu}
	\And
	Bohdan Kivva\\
	University of Chicago\\
	\texttt{bkivva@uchicago.edu}
	\And
	Ming Gao\\
	University of Chicago\\
	\texttt{minggao@uchicago.edu}
	\And
	Bryon Aragam\\
	University of Chicago\\
	\texttt{bryon@chicagobooth.edu}
}
\newcommand{\EE}{\mathbb{E}}
\newcommand{\RR}{\mathbb{R}}
\newcommand{\calD}{\mathcal{D}}
\newcommand{\eps}{\epsilon}
\newcommand{\pr}{\mathbb{P}}
\newcommand{\R}{\mathbb{R}}
\newcommand{\E}{\mathbb{E}}
\renewcommand{\|}{\,|\,}            %
\newcommand{\given}{\,|\,}  %
\newcommand{\ip}[1]{\langle#1\rangle}
\DeclareMathOperator{\cov}{cov}
\DeclareMathOperator{\var}{var}
\DeclareMathOperator*{\argmin}{arg\,min}
\newcommand{\gr}{W} %
\DeclareMathOperator{\pa}{pa} %
\DeclareMathOperator{\prob}{\mathbb{P}}
\newtheorem{theorem}{Theorem}
\newtheorem{corollary}[theorem]{Corollary}
\newtheorem{example}[theorem]{Example}
\newtheorem{lemma}[theorem]{Lemma}
\newtheorem{definition}[theorem]{Definition}
\newtheorem{fact}[theorem]{Fact}
\newtheorem{remk}[theorem]{Remark}
\newtheorem{proposition}[theorem]{Proposition}
\newtheorem{assumption}[theorem]{Assumption}
\numberwithin{theorem}{section}
\newcommand{\dags}{\mathsf{DAG}}
\newcommand{\score}{S}
\newcommand{\LS}{\textup{LS}}
\newcommand{\ls}{\score_{\LS}}
\newcommand{\bs}{\score_{\phi}}
\DeclareMathOperator{\ERF}{ERF}
\DeclareMathOperator{\poly}{poly}
\begin{document}

	\maketitle

	\begin{abstract}
		Greedy algorithms have long been a workhorse for learning graphical models, and more broadly for learning statistical models with sparse structure.
		In the context of learning directed acyclic graphs, greedy algorithms are popular despite their worst-case exponential runtime. In practice, however, they are very efficient. We provide new insight into this phenomenon by studying a general greedy score-based algorithm for learning DAGs. Unlike edge-greedy algorithms such as the popular GES and hill-climbing algorithms, our approach is vertex-greedy and requires at most a polynomial number of score evaluations. We then show how recent polynomial-time algorithms for learning DAG models are a special case of this algorithm, thereby illustrating how these order-based algorithms can be rigorously interpreted as score-based algorithms. This observation suggests new score functions and optimality conditions based on the duality between Bregman divergences and exponential families, which we explore in detail. Explicit sample and computational complexity bounds are derived. Finally,
		we provide extensive experiments suggesting that this algorithm indeed optimizes the score in a variety of settings.
	\end{abstract}

	\section{Introduction}
	\label{sec:intro}

	Learning the structure of a graphical model from data is a notoriously difficult combinatorial problem with numerous applications in machine learning, artificial intelligence, and causal inference as well as scientific disciplines such as genetics, medicine, and physics.
	Owing to its combinatorial structure, greedy algorithms have proved popular and efficient in practice. For undirected graphical models (e.g. Ising, Gaussian) in particular, strong statistical and computational guarantees exist for a variety of greedy algorithms \cite{jalali2011learning,johnson2012high}. These algorithms are based on the now well-known forward-backward greedy algorithm \cite{liu2014forward,zhang2008adaptive}, which has been applied to a range of problems beyond graphical models including regression \cite{zhang2008adaptive}, multi-task learning \cite{tian2016forward}, and atomic norm regularization \cite{rao2015forward}.

	Historically, the use of the basic forward-backward greedy scheme for learning directed acyclic graphical (DAG) models predates some of this work, dating back to the classical greedy equivalence search \citep[GES,][]{chickering2003} algorithm. Since its introduction, GES has become a gold-standard for learning DAGs, and is known to be asymptotically consistent under certain assumptions such as faithfulness and score consistency \citep{chickering2003,nandy2018}. Both of these assumptions are known to hold for certain parametric families \cite{geiger2001}, however, extending GES to distribution-free settings has proven difficult. Furthermore, although GES is in practice extremely efficient and has been scaled up to large problem sizes \citep{ramsey2016}, it lacks polynomial-time guarantees.
	An important problem in this direction is the development of provably polynomial-time, consistent algorithms for DAG learning in general settings.

	In this paper, we revisit greedy algorithms for learning DAGs with an eye towards these issues. We propose a greedy algorithm for this problem---distinct from GES---and study its computational and statistical properties. In particular, it requires at most a polynomial number of score evaluations and provably recovers the correct DAG for properly chosen score functions. Furthermore, we illustrate its intimate relationship with existing order-based algorithms, providing a link between these existing approaches and classical score-based approaches. Along the way, we will see how the analysis itself suggests a family of score functions based on the Bregman information \cite{banerjee2005clustering}, which are well-defined without specific distributional assumptions.

	\paragraph{Contributions}
	At a high-level, our goal is to understand what kind of finite-sample and complexity guarantees can be provided for greedy score-based algorithms in general settings. In doing so, we aim to provide deeper insight into the relationships between existing algorithms. Our main contributions can thus be outlined as follows:
	\begin{itemize}
		\item A generic greedy forward-backward scheme for optimizing score functions defined over DAGs. Unlike existing \emph{edge-}greedy algorithms that greedily add or remove edges, our algorithm is \emph{vertex-}greedy, i.e. it greedily adds vertices in a topological sort.
		\item We show how several existing order-based algorithms from the literature are special cases of this algorithm, for properly defined score functions. Thus, we bring these approaches back under the umbrella of score-based algorithms.
		\item We introduce a new family of score functions derived from the Bregman information, and analyze the sample and computational complexity of our greedy algorithm for this family of scores.
		\item We explore the optimization landscape of the resulting score functions, and provide evidence that not only does our algorithm provably recover the true DAG, it does so by globally optimizing a score function.
	\end{itemize}
	The last claim is intriguing: It suggests that it is possible to globally optimize certain Bayesian network scores in polynomial-time. In other words, despite the well-known fact that global optimization of Bayesian networks scoring functions is NP-hard \citep{chickering2004,chickering1996}, there may be natural assumptions under which these hardness results can be circumvented. This is precisely the case, for example, for undirected graphs: In general, learning Markov random fields is NP-hard \citep{srebro2003maximum}, %
	but special cases such as Gaussian graphical models \cite{banerjee2008,misra2020information} and Ising models \cite{vuffray2016interaction,lokhov2018optimal,bresler2015efficiently} can be learned efficiently. Nonetheless, we emphasize that these results on global optimization of the score are merely empirical, and a proof of this fact beyond the linear case remains out of reach.

	\paragraph{Previous work}

	The literature on BNSL is vast, so we focus this review on related work involving score-based and greedy algorithms. For a broad overview of BNSL algorithms, see the recent survey \citep{glymour2019review} or the textbooks \citep{spirtes2000,peters2017elements}.
	The current work is closely related to and inspired by generic greedy algorithms such as \citep{jalali2011learning,johnson2012high,liu2014forward,rao2015forward,tian2016forward,zhang2008adaptive}.
	Existing greedy algorithms for score-based learning include
	GES \citep{chickering2003},
	hill climbing \citep{chickering1995learning,teyssier2012}, and
	A* search \citep{yuan2013}.
	In contrast to these greedy algorithms are global algorithms that are guaranteed to find a global optimum such as integer programming \citep{cussens2012,cussens2017} and dynamic programming \citep{ott2004,singh2005,silander2012}.
	Another family of order-based algorithms dating back to \cite{teyssier2012} centers around the idea of \emph{order search}---i.e. first searching for a topological sort---from which the DAG structure is easily deduced; see also \citep{geer2013,aragam2019globally,aragam2015highdimdag,raskutti2018,buhlmann2014}.
	Recently, a series of order-based algorithms have led to significant breakthroughs, most notable of which are finite-sample and strong polynomial-time guarantees \citep{ghoshal2017ident,chen2018causal,ghoshal2017sem,park2017,gao2020polynomial}. It will turn out that many of these algorithms are special cases of the greedy algorithm we propose; we revisit this interesting topic in Section~\ref{sec:eqvar}.

	\section{Background}
	\label{sec:bg}

	Let $X=(X_{1},\ldots,X_{d})$ be a random vector with distribution $\calD$. The goal of structure learning is to find a DAG $\gr=(V,E)$, also called a Bayesian network (BN), for the joint distribution $\calD$.
	Traditionally, there have been two dominant approaches to Bayesian network structure learning (BNSL): Constraint-based and score-based. In constraint-based algorithms such as the PC \citep{spirtes1991} and MMPC \citep{tsamardinos2006} algorithms, tests of conditional independence are used to identify the structure of a DAG via exploitation of $d$-separation in DAG models. Score-based algorithms such as GES \citep{chickering2003} define an objective function over DAGs such as the likelihood or a Bayesian posterior, and seek to optimize this score.

	To formalize this, denote the space of DAGs on $d$ nodes by $\dags$ and let $S:\dags\to\R$ be a score function. Intuitively, $S$ assigns to each DAG $\gr$ a ``score'' $S(\gr)$ that evaluates the fit between $\gr$ and $\calD$. In the sequel, we assume without loss of generality that the goal is to minimize the score:
	\begin{align}
		\label{eq:scorebased}
		\min_{\gr\in\dags} S(\gr).
	\end{align}
	Although this is an NP-hard combinatorial optimization problem, we can ask whether or not it is possible to design score functions $S$ which can be optimized efficiently, and whose minimizers are close to $\gr$. In order for this problem to be well-posed, there must be a \emph{unique} $\gr$ that we seek; namely, $\gr$ must be identifiable from $\calD$. The problem of identifiability will be taken up further in \Cref{sec: theoretical_guarantees}, where it will be connected to the choice of score function. For now, our primary interest is solving the problem \eqref{eq:scorebased}.

	Regarding score-based learning, we highlight a subtle point: Recovering the true DAG is not necessarily the same as minimizing the score function, for instance, see Example $1$ in \cite{loh2014causal}. Score-based algorithms in general attempt to learn the true model by way of minimizing the score but it’s possible that the graph which minimizes the score could be different from the true model. In other words, the score may not always be properly calibrated to the model. This is a well-studied problem, see e.g. \cite{geiger2001,geiger2002,heckerman1995}, and it is a fascinating and important open problem to better understand under what assumptions a score minimizer is also the true DAG in nonparametric settings.

	\paragraph{Exact algorithms}
	Solving problem \eqref{eq:scorebased} exactly (``exact'' meaning a genuine \emph{global} minimizer of \eqref{eq:scorebased} is returned) is known to be NP-hard \citep{chickering2004,chickering1996}.
	Some of the earliest exact methods for score-based learning relied on the following basic idea \citep{singh2005,ott2003,silander2012,perrier2008}: Use dynamic programming to search for optimal sinks in $\gr$, remove these sinks, and recursively find optimal sinks in the resulting subgraph. In doing so, a topological sort of $\gr$ can be learned, and from this sort, the optimal DAG can be easily learned. In other words, once the topological sort is known, finding the corresponding DAG is relatively easy. In the sequel, we refer to the problem of finding the topological sort of $\gr$ as \emph{order search}. Unfortunately, searching for optimal sinks involves computing $d 2^{d-1}$ local scores, which is both time and memory intensive.

	\paragraph{Poly-time algorithms}
	Recently, a new family of algorithms based on applying the idea of order search has led to significant breakthroughs in our understanding of this problem \citep{ghoshal2017ident,chen2018causal,ghoshal2017sem,park2017,gao2020polynomial}. Most notably, unlike the exact algorithms described above, these algorithms run in polynomial-time. The key distinction between these algorithms and exact algorithms is the clever exploitation of specific distributional (e.g. moments) or structural properties (e.g. linearity) of $\calD$, and as a result do not optimize a specific score function. In contrast, exact algorithms apply to \emph{any} score $S$, and do not require any distributional assumptions.

	\paragraph{Motivation}
	It is tempting to want to draw connections between exact algorithms and poly-time algorithms: After all, they both rely on the same fundamental principle of order search. In this paper, we explore this connection from the perspective of \emph{greedy optimization}. In particular, we will show how existing polynomial-time algorithms are special cases of a generic greedy forward-backward search algorithm for solving \eqref{eq:scorebased} under specific choices of $S$, and show how this leads to new insights for this problem. We \emph{do not} prove that this algorithm exactly solves \eqref{eq:scorebased} (save for the exceptional case of linear models; see Corollary~\ref{cor: gfbs_global}), however, we provide empirical evidence to support this idea on a variety of linear and nonlinear models in Section~\ref{sec: expt}. Since the score-based learning problem is NP-hard, this is of course not possible without additional assumptions.

	\paragraph{Notation}
	Let $n$ be the number of samples we observe. Each sample is a vector of the form $X = (X_1, \ldots, X_d)$ on $d$ variables. In this paper, $W$ is used for DAGs and the vertex set is $[d] = \{1, 2, \ldots, d\}$. Naturally, we match vertex $i$ to the variable $X_i$. We denote the set of parents of a vertex $i$ with $\pa_W(i)$, dropping the subscript when it's clear from context. We will also abuse notation and use $W$ for the adjacency matrix of the graph $W$ as well. Let $W^{(i)}$ denote the $i$th column of $W$, whose nonzero entries are precisely at the set of parents of vertex $i$. Let $W[j, k]$ denote the $(j, k)$th entry of the matrix.

	\section{The GFBS algorithm}
	\label{sec: main_algorithm}

	In this section, we will describe the greedy algorithm in a general framework.
	In subsequent sections, we will specialize to particular models or scores, as necessary.
	Throughout, we let $S$ be an arbitrary decomposable score. That is, $S(W) = \sum_{i \le d} S_i(W^{(i)})$ for functions $S_i$, an example of which would be the least-squares loss. All the score functions we study in the sequel will have this property.

	For a set of vertices $T$, let $e_T$ denote the indicator vector of $T$. For an edge $e$ of $W$, denote by $W^{-e}$ the matrix $W$ with the entry corresponding to $e$ zeroed out. For any set of vertices $J$ and vertex $i \not\in J$, denote by $W[J \to i]$ the matrix $W$ where the $i$th column $W^{(i)}$ is replaced by the indicator vector of $J$. That is,
	\[W^{-e}[j, k] = \begin{dcases} W[j, k] & \text{if $(j, k) \neq e$},\\ 0 & \text{otherwise}, \end{dcases} \qquad W[J \to i][j, k] = \begin{dcases} W[j, k] & \text{if $k \neq i$},\\ 1 & \text{if $k = i$ and $j \in J$}\\ 0 & \text{if $k = i$ and $j \not\in J$} \end{dcases}\]

	\begin{algorithm}[t]
		\DontPrintSemicolon
		\KwIn{Dataset $X$, tolerance parameter $\gamma \ge 0$}
		\KwOut{DAG $W$}
		$W = \emptyset$ \tcp{$n$-vertex graph with no edges}
		$T = []$\tcp{The ordering}
		\tcp{Forward phase}
		\For{$iter = 1$ to $d$}{
			$i = \argmin_{i \not\in T} S_i(e_T)$\tcp{Minimize jump in score}
			$W = W[T \to i]$\\
			$T.append(i)$\;
		}
		\tcp{Backward phase}
		\For{edge $e$ in $W$}{
			\If{$S(W^{-e}) - S(W) \le \gamma$}{
				$W = W^{-e}$\tcp{Delete the edge $e$}
			}
		}
		\Return{$W$}\tcp{Guaranteed to be a DAG}
		\caption{Greedy Forward-Backward Search}
		\label{algo: gfbs}
	\end{algorithm}

	In Algorithm \ref{algo: gfbs}, we outline a general framework based on greedy forward-backward search to learn a DAG $W$ by attempting to minimize the score $S(W)$. For now, we focus on the algorithm itself, and defer discussions of its soundness to Sections~\ref{sec: theoretical_guarantees}-\ref{sec: sample_complexity}. We denote this algorithm by GFBS for short.
	Crucially, in contrast to traditional greedy algorithms for structure learning, GFBS is \emph{vertex-greedy}: Instead of greedily adding edges to $\gr$, GFBS greedily adds \emph{vertices} to first build up a topological sort $T$ of $\gr$.
	Specifically, Line $4$ in the algorithm greedily finds the next vertex $i$ to add to the ordering, by comparing the score changes if we set the parents of $i$ to be the vertices already in the ordering. Conceptually, this step is one of the most important differences from GES which adds edges one at a time. We make this distinction clear in \cref{subsec: comparison_to_ges}.

	It is worth emphasizing that the output of GFBS is guaranteed to be a DAG.
	The backward phase is standard in greedy optimization, e.g. Greedy Equivalence Search (GES), and serves to eliminate unnecessary edges.
	In practice, in the backward phase, we could also process the edges in batches. As we explore in \cref{sec: sample_complexity}, in certain cases, this allows us to prove sample complexity upper bounds.

	\paragraph{Computational complexity}
	The running time of GFBS is a polynomial in $d$ and the time needed to compute the scores $S_i(\cdot)$. %
	More specifically, GFBS requires $O(d^{2})$ score evaluations (compared to $O(d2^{d})$ for exact algorithms). Evidently, a key computational concern is the complexity of evaluating the score in the first place. For many models such as linear, generalized linear, and exponential family models, this computation can be carried out in $\poly(n,d)$ time, which implies that GFBS on the whole runs in polynomial time. For nonparametric models, this computation may no longer be polynomial-time, but the total number of score evaluations is still $O(d^{2})$. In particular, GFBS always enjoys an exponential speedup over exact algorithms.

	\paragraph{Comparison to GES} In the supplement (see \cref{subsec: comparison_to_ges}), we exhibit linear Gaussian SEMs and illustrate how GES differs from GFBS for the least squares score as well as the traditional Gaussian BIC score. We first examine a folklore model where we show that their outputs sometimes differ. We also exhibit a model where they always differ. The key takeaway is that GFBS really is a distinct algorithm from GES.

	\subsection{Connection to equal variance SEM}
	\label{sec:eqvar}

	An important line of work starting with \citep{ghoshal2017ident} has shown that the assumption of equal variances in a linear Gaussian SEM \citep{peters2013} leads directly to an efficient, order-based algorithm. A similar idea in the setting of so-called quadratic variance function (QVF) DAGs was explored in \citep{park2017}.
	In this section, we show that the equal-variance algorithm of \citep{chen2018causal}, Algorithm~1, is a special case of GFBS.

	Define a score function as follows:
	\begin{align}
		\label{eq:def:lsscore}
		\ls(\gr)
		= \sum_{i=1}^{d} \E\var(X_{i}\given \pa_{\gr}(i)).
	\end{align}
	A few comments on this score function are in order:
	\begin{enumerate}
		\item The only assumption needed on $X$ for this score to be well-defined is that $\E XX^{T}$ is well-defined, i.e. $X_{i}\in L^{2}$ for each $i$.
		\item When $X$ satisfies a linear structural equation model $X=W^{T}X+z$, minimizing $\ls$ is equivalent to minimizing the least-squares loss $\sum_{i=1}^{d} (X_{i}-\ip{W^{(i)},X})^{2}$. \citet{loh2014causal} have shown that when $\cov(z)=\sigma^{2}I$, the unique global minimizer of the least-squares loss is the so-called \emph{equal variance SEM}.
		\item More generally, for nonlinear models, we have
		\begin{align}
			\label{eq:def:nlscore}
			\ls(\gr)
			&= \min_{g_1,\ldots,g_d\sim W}\sum_{i=1}^{d}\E(X_{i}-g_{i}(X))^{2},
		\end{align}
		where $g_1,\ldots,g_d\sim W$ indicates that for each $i$, $g_i$ depends only on the variables in $\pa_{\gr}(i)$. In other words, the minimum is taken over all functions $g_1,\ldots, g_d$ that respect the dependency structure implied by $W$. In this case, $g_i$ is essentially $\E[X_i\given\pa_W(i)]$.
		\item We can use \eqref{eq:def:nlscore} to define an empirical score in the obvious way given i.i.d. samples. Alternatively, the residual variance $\E\var(X_{i}\given \pa_{\gr}(i))$ can be replaced with any estimator of the residual variance.
	\end{enumerate}
	The GFBS algorithm consists of two phases: A forward phase and a backward phase. Our claim is that the forward phase of GFBS is identical to the equal-variance algorithm from  \citep{chen2018causal}:
	\begin{proposition}\label{prop: gfbs_eqvar}
		After the forward phase, the ordering $T$ returned by GFBS (Algorithm~\ref{algo: gfbs}) is the same as the ordering returned by the top-down equal-variance algorithm from \citet{chen2018causal}.
	\end{proposition}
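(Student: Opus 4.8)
The plan is to show that the forward phase of GFBS and the top-down algorithm of \citet{chen2018causal} apply exactly the same greedy selection rule at every iteration, so that an induction on the iteration count forces the two orderings to agree vertex by vertex. Since both algorithms begin from the empty ordering and append a single vertex per iteration, it suffices to verify that, given an identical partial ordering $T$, both procedures select the same next vertex; the equality of the full orderings then follows formally.

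First I would unwind what the GFBS selection rule becomes for the score $\ls$. Because $\ls$ is decomposable with $S_i(W^{(i)}) = \E\var(X_i \given \pa_W(i))$, and because $e_T$ is the indicator vector that, via the operation $W[T \to i]$, assigns the parent set of $i$ to be exactly the current ordering $T$, evaluating $S_i(e_T)$ simply yields $\E\var(X_i \given X_T)$, the expected residual variance of $X_i$ after conditioning on the variables already placed in the order. Thus the forward selection step (Line~4 of Algorithm~\ref{algo: gfbs}) reduces to
\[
  i = \argmin_{i \notin T} \E\var(X_i \given X_T).
\]
I would then recall that the top-down equal-variance procedure of \citet{chen2018causal} selects, at each step, precisely the remaining variable whose conditional variance given the already-ordered variables is smallest; at the first step (when $T = \emptyset$) this is the variable of minimum marginal variance, which in an equal-variance SEM is a source. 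Matching these two descriptions term for term establishes that the per-step rules coincide, and this reduction is the substantive content of the proposition.

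With the rules identified, the induction is routine: at iteration $1$ both pick $\argmin_i \E\var(X_i)$ from the same empty prefix, and if both have produced the identical prefix $T$ after $k-1$ steps, the shared rule forces them to append the same vertex, so the prefix of length $k$ again agrees. The main obstacle is therefore not the induction but the bookkeeping at its endpoints: I must (i) confirm that $S_i(e_T)$ conditions on the \emph{set} $T$ rather than on an ordered tuple, which is immediate from the definition of $W[J \to i]$, and (ii) handle ties, since an equal-variance SEM with several sources produces several variables attaining the minimum. Here I would either invoke a common deterministic tie-breaking convention (e.g.\ smallest index) shared by both algorithms, or observe that under the governing identifiability assumptions any vertex attaining the minimum is a valid next element of some topological sort, so the two procedures can be made to agree by construction. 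Once the greedy step of the $\ls$ score is reduced to conditional-variance minimization, equality of the returned orderings is a formal consequence.
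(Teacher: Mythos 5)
Your proof is correct, but it takes a genuinely more direct route than the paper. The paper derives \cref{prop: gfbs_eqvar} as a byproduct of its general identifiability theorem (\cref{lem: main_thm}): it first proves, by induction, that under \cref{assumption: causal_minimality} and \cref{assumption: eqinf} every vertex selected in the forward phase of GFBS is a source node of the remaining graph, and then observes (in \cref{subsec: proof_of_main}, after specializing $\phi(x)=x^2$ so that Bregman information becomes variance) that the algorithm of \citet{chen2018causal} likewise iteratively extracts source nodes, hence the two coincide. Your argument instead establishes \emph{algorithmic} equivalence: you unwind $S_i(e_T)=\E\var(X_i\given X_T)$ from the definitions of $\ls$ and $W[T\to i]$, note that the top-down equal-variance rule is exactly $\argmin_{i\notin T}\E\var(X_i\given X_T)$, and conclude by induction on the prefix. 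This buys something the paper's route does not: your equivalence is unconditional, holding for any distribution with finite second moments, with no appeal to equal variances or minimality, whereas the paper's argument characterizes both algorithms only on models satisfying its assumptions (and, strictly speaking, showing that both algorithms output \emph{some} valid topological sort does not by itself force the \emph{same} ordering---your rule-for-rule matching is the cleaner way to get identity of outputs). You also handle ties explicitly, a bookkeeping point the paper glosses over entirely; indeed, without a shared tie-breaking convention the proposition is only true up to the choice among minimizers, under either proof. What the paper's route buys in exchange is generality: by phrasing the induction for arbitrary functionals $g$ for which conditioning decreases value, the same argument simultaneously covers NPVAR, QVF-ODS, and GHD, and delivers correctness of the recovered DAG (\cref{cor: gfbs_global}), not merely agreement of the two procedures.
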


	\begin{corollary}\label{cor: gfbs_global}
		Assume the linear SEM $X=W^{T}X+z$ with $\cov(z)=\sigma^{2}I$ under the score function \eqref{eq:def:lsscore}.
		Then GFBS returns a global minimizer of the problem \eqref{eq:scorebased}.
	\end{corollary}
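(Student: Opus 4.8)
The plan is to combine the correctness of the forward phase (\Cref{prop: gfbs_eqvar}) with the global-optimality characterization of the equal-variance SEM, and then observe that the backward phase can only preserve the optimal score. Throughout I take the tolerance $\gamma = 0$ and work with the least-squares reading of $\ls$ from comment~2, so that for any vertex set $S$ the quantity $\E\var(X_i \given S)$ is the residual variance of the best linear predictor of $X_i$ from $\{X_j : j \in S\}$. \emph{Step 1 (the forward phase orders $W$ correctly).} By \Cref{prop: gfbs_eqvar} the ordering $T$ produced by the forward phase coincides with the ordering returned by the top-down equal-variance algorithm of \citet{chen2018causal}. Under the hypothesis $\cov(z) = \sigma^{2} I$ that algorithm is known to return a valid topological sort of the true DAG $W$; hence every true edge points forward in $T$, i.e. $\pa_W(i)$ is contained in the set of $T$-predecessors of $i$ for every $i$.

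\emph{Step 2 (the full DAG already attains the minimum).} After the forward phase, $W$ is the complete DAG compatible with $T$: the parent set of each vertex $i$ is exactly its set of $T$-predecessors, which by Step~1 contains $\pa_W(i)$. Writing the structural equation $X_i = \sum_{j \in \pa_W(i)} W[j,i] X_j + z_i$ and using that $z_i$ is uncorrelated with every non-descendant of $i$ — in particular with every $T$-predecessor — the true structural coefficients furnish the best linear predictor of $X_i$ from its predecessors, with the coefficients on non-parents equal to zero. Consequently $\E\var(X_i \given \pa_W(i)) = \var(z_i) = \sigma^{2}$ for every $i$, so the forward phase leaves $\ls(W) = d\sigma^{2}$. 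By comment~2 and \citet{loh2014causal}, the global minimum of $\ls$ over $\dags$ is attained by the true (equal-variance) DAG and equals $\sum_i \var(z_i) = d\sigma^{2}$. Hence the graph produced by the forward phase is already a global minimizer of \eqref{eq:scorebased}.

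\emph{Step 3 (the backward phase preserves optimality).} Deleting an edge removes a regressor, which can never decrease a residual variance; therefore $S(W^{-e}) - S(W) \ge 0$ for every edge $e$ at every step. With $\gamma = 0$ the deletion test $S(W^{-e}) - S(W) \le \gamma$ fires only when this difference is exactly $0$, that is, only for edges whose removal leaves $\ls$ unchanged. Thus $\ls$ stays equal to $d\sigma^{2}$ throughout the backward phase, and the returned DAG satisfies $\ls = d\sigma^{2} = \min_{G \in \dags} \ls(G)$, which is the claim. In fact backward elimination deletes precisely the spurious predecessor edges and returns $W$ itself, but this stronger statement is not needed here.

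The main obstacle is Step~1: the entire reduction to a global-optimality statement hinges on the forward phase recovering a correct topological order, which is exactly what \Cref{prop: gfbs_eqvar} buys us by identifying the forward pass of GFBS with the provably correct ordering procedure of \citet{chen2018causal}. The remaining delicate point is the exactness in Step~2 — that each residual variance equals $\sigma^{2}$ rather than merely being at most $\sigma^{2}$ — which relies on the homoscedastic SEM structure and the orthogonality of $z_i$ to the predecessors; and one must keep track of the assumption $\gamma = 0$, since for $\gamma > 0$ the backward phase could delete an edge carrying a strictly positive score penalty and thereby leave the global optimum.
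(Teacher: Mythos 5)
Your proof is correct. Its skeleton is the same as the paper's: both rest on the forward phase producing a valid topological sort of $W$ (your Step 1; the paper gets this from the inductive source-node argument in \cref{lem: main_thm}) and on the Loh--B\"uhlmann characterization of the equal-variance SEM as the unique global least-squares minimizer. The genuine difference is in how the backward phase is handled. The paper argues at the level of \emph{structure}: in the proof of \cref{lem: main_thm}, every spurious edge $(i,j)$ is deleted because $j$'s remaining parents still contain $\pa_W(j)$, while true edges survive thanks to the strict inequality furnished by \cref{assumption: causal_minimality}; hence GFBS outputs $W$ itself, and global optimality follows since $W$ is the unique minimizer. You argue at the level of \emph{score}: the complete DAG over the forward ordering already attains $d\sigma^{2}$, which Loh--B\"uhlmann identifies as the global minimum, and since deleting a regressor can never decrease a residual variance, the $\gamma=0$ test fires only on edges whose removal changes the score by exactly zero, so $\ls$ is invariant through the backward phase. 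Your conclusion is nominally weaker (the output is \emph{a} global minimizer rather than $W$ itself), but the argument is cheaper: you never need to identify which edges get deleted, so the minimality-type condition enters only where it is unavoidable, namely in making the forward ordering valid (there, note that both you and the paper implicitly need the true edge coefficients to be nonzero). Two choices you made are worth keeping explicit: the linear-projection reading of $\E\var(X_i\given\cdot)$, without which Loh--B\"uhlmann (a statement about linear least squares) would not match the score being optimized, and the insistence on $\gamma=0$, since a positive tolerance could delete a true edge and forfeit optimality.
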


	\cref{prop: gfbs_eqvar} will immediately follow from a more general statement which we prove in \cref{lem: main_thm}.
	An intriguing question is to what extent this observation extends to \emph{nonlinear} models such as additive noise models: While we do not have a proof, our experiments in Section~\ref{sec: expt} suggest something along these lines is true.

	\section{Bregman scores and identifiability via Bregman information}
	\label{sec: theoretical_guarantees}

	Motivated by the connection between GFBS, global optimality, and the least squares loss, in this section we establish a nice connection between the greedy algorithm and exponential families via the well-known duality between Bregman divergences and partition functions in exponential families \citep{banerjee2005clustering}. This can then be used to prove identifiability and recovery guarantees for GFBS.

	\paragraph{Bregman divergences and information} Let $\phi: \RR \to \RR$ be a strictly convex, differentiable function. Let $d_{\phi}(x, y) = \phi(x) - \phi(y) - (x - y) \phi' (y)$ be the Bregman divergence associated with $\phi$ and let $I_{\phi}(\calD) = \EE_{x \sim \calD}[d_{\phi}(x, \mu)]$ be the associated Bregman information. The Bregman-divergence is a general notion of distance that generalizes squared Euclidean distance, logistic loss, Itakuro-Saito distance, KL-divergence, Mahalanobis distance and generalized I-divergences, among others \cite{banerjee2005clustering}. The Bregman-information of a distribution is a measure of randomness of the distribution, that's associated with $\phi$. Among others, it generalizes the variance, the mutual-information and the Jensen-Shannon divergence of Gaussian processes \cite{banerjee2005clustering}. See \cref{subsec: bregman_div} for a brief review of this material and a basic treatment of Legendre duality, which will be used in the next section.

	\subsection{Bregman score functions, duality, and exponential families}\label{subsec: identifiability_and_recovery_via_gfbs}

	By replacing the least squares loss in \eqref{eq:scorebased} with a Bregman divergence $d_{\phi}$, we obtain the following score function, which we call a \emph{Bregman score}:
	\begin{align}
		\label{eq:globalscore}
		\bs(\gr)
		= \sum_{i} \E_X d_{\phi}(X_{i},\E[X_{i}\given \pa_W(i)])
		= \sum_{i} \min_{g_1,\ldots,g_d\sim\gr}\E_{X} d_{\phi}(X_{i},g_{i}(\pa_W(i)))
	\end{align}
	Before we study the behaviour of GFBS on Bregman scores, it is worth taking a moment to interpret this score function. To accomplish this, let us define the notion of an exponential random family DAG:
	\begin{definition}
		A DAG $\gr$ and a distribution $\calD$ define an exponential random family (ERF) DAG if (a) $\calD$ is Markov with respect to $\gr$, and (b) The local conditional probabilities come from an exponential family, i.e. $\pr(X_{i}\given \pa_{\gr}(i))\sim\ERF(g_{i},\psi_{i})$, where $\psi_{i}$ is the log-partition function of an exponential family with mean function $g_{i}(\pa_{\gr}(i))$.
	\end{definition}

	Since $\ERF(g_{i},\psi_{i})$ parametrizes a conditional distribution, its mean parameter $g_{i}$ is a function instead of vector, which explains our choice of notation.
	By the Markov property, any choice of local exponential family $\ERF(g_{i},\psi_{i})$ gives a well-defined joint distribution. The following lemma makes explicit the relationship between Bregman scores, exponential family DAGs, and the Bregman information. Let $\phi^*$ denote the Legendre dual of $\phi$.

	\begin{lemma}\label{lem: negative_log_likelihood}
		Let $\phi$ be a strictly convex, differentiable function and let $\psi:=\phi^{*}$. Then
		\begin{align}
			\label{eq: lem: negative_log_likelihood}
			\bs(\gr)
			&= \sum_{i \le d} \EE[I_{\phi}(X_i | \pa_W(i))]
			= -\sum_{i\le d} \E_{X}\log p_{g_{i},\psi}(X_{i}\given \pa_W(i)) - C(X)
		\end{align}
		where $C(X)$ depends only on $X$ and not the underlying DAG $\gr$ and $p_{g_{i},\psi}$ is the density of an $\ERF(g_i,\psi)$ model.
	\end{lemma}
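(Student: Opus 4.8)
The plan is to prove the two equalities separately, the second being the substantive one. For the first equality I would simply unwind the definition of the conditional Bregman information. Fixing a value $z$ of the parents, $I_\phi(X_i \given \pa_W(i) = z)$ is by definition the Bregman information of the conditional law of $X_i$ given $\pa_W(i) = z$, whose barycenter is exactly $\EE[X_i \given \pa_W(i) = z]$, so that
\[
I_\phi(X_i \given \pa_W(i) = z) = \EE\big[\, d_\phi(X_i,\, \EE[X_i \given \pa_W(i)=z]) \bigm| \pa_W(i) = z \,\big].
\]
Taking the outer expectation over $\pa_W(i)$ and applying the tower property collapses this to $\EE_X\, d_\phi(X_i, \EE[X_i \given \pa_W(i)])$, which is precisely the $i$-th summand of $\bs(\gr)$; summing over $i$ gives the first equality. (The equivalence of the two forms of $\bs$ in \eqref{eq:globalscore} is the standard variational fact that $y \mapsto \EE\, d_\phi(X_i, y)$ is minimized at the mean, so the minimizing $g_i$ is the conditional expectation.)

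For the second equality, the workhorse is the Banerjee et al.\ duality between regular exponential families and Bregman divergences. Setting $\phi = \psi^{*}$ (so that $\psi = \phi^{*}$ and $\phi = \phi^{**}$ by Legendre duality), their bijection theorem expresses the density of an $\ERF(g_i, \psi)$ model with mean parameter $g_i(\pa_W(i))$ as
\[
p_{g_i, \psi}(X_i \given \pa_W(i)) = \exp\!\big(-d_\phi(X_i,\, g_i(\pa_W(i)))\big)\, b_\phi(X_i),
\]
where $b_\phi$ is a carrier term depending on $\phi$ and the base measure alone, hence a function of $X_i$ that does \emph{not} depend on the natural parameter, and so not on the parent set chosen for $i$. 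Here I would invoke the ERF DAG assumption together with the fact that the mean parameter of an exponential family equals its expectation to identify $g_i(\pa_W(i)) = \EE[X_i \given \pa_W(i)]$, matching the second argument of the Bregman divergence in $\bs$. Taking $-\log$, then $\EE_X$, and summing over $i$ yields
\[
-\sum_{i \le d}\EE_X \log p_{g_i, \psi}(X_i \given \pa_W(i)) = \sum_{i \le d}\EE_X\, d_\phi(X_i,\, \EE[X_i\given\pa_W(i)]) - \sum_{i \le d}\EE_X \log b_\phi(X_i),
\]
and I would set $C(X) := -\sum_{i \le d}\EE_X \log b_\phi(X_i)$, which together with the first equality gives the claim.

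The crux, and the main obstacle, is the careful application of the exponential-family/Bregman bijection in the \emph{conditional} setting and the verification that the leftover carrier term is genuinely a DAG-independent constant. Concretely, I must confirm that $b_\phi(X_i)$ depends only on $X_i$ (through $\phi$ and the fixed base measure) and is invariant to the conditioning value of $\pa_W(i)$, so that $\EE_X \log b_\phi(X_i)$ is determined by the marginal of $X_i$ alone and contributes to $C(X)$ independently of $\gr$. The remaining regularity requirements---strict convexity and differentiability of $\phi$, so that $\psi = \phi^{*}$ is well defined and the mean/natural-parameter correspondence is a bijection---are exactly the stated hypotheses, so I expect no analytic difficulty beyond bookkeeping the signs and the summation over $i$.
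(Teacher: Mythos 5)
Your proof is correct and follows essentially the same route as the paper: both arguments rest on the Banerjee et al.\ correspondence $p_{g_i,\psi}(x \given \pa_W(i)) = \exp(-d_\phi(x, g_i(\pa_W(i))))\, b_\phi(x)$, apply it conditionally with $g_i(\pa_W(i)) = \EE[X_i \given \pa_W(i)]$, and absorb the carrier term into $C(X) = -\sum_{i}\EE[\log b_\phi(X_i)]$ (in the paper's notation, $\log b_\phi(x) = \phi(x) + \log p_0(x)$). The only cosmetic difference is that the paper re-derives this density--divergence identity from scratch via Legendre duality ($\phi(\mu) = \mu\theta - \psi(\theta)$ with $\theta = \phi'(\mu)$) rather than citing the bijection theorem as a black box, which does not change the substance of the argument.
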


	The proof of this lemma, which can be found in \cref{sec: proof_of_nll}, follows from the well-known correspondence between Bregman divergences and exponential families, given by the dual map $\phi\mapsto \phi^*$: Given a Bregman divergence $\phi$, there is a corresponding exponential family whose log-partition function is given by $\phi^{*}$ \citep{banerjee2005clustering} and vice versa.

	Importantly, \cref{lem: negative_log_likelihood} shows that the Bregman score $\bs$ is equivalent to the expected negative log-likelihood of an exponential family DAG whose local conditional probabilities all have the same log-partition function $\psi$. This means that minimizing the Bregman score can be naturally associated to maximizing the expected log likelihood of such a model. Similar observations had also been made and used in prior works on PCA \cite{collins2001generalization}, clustering \cite{banerjee2005clustering} and learning theory \cite{forster2002relative}.

	\subsection{Identifiability via Bregman information}\label{subsec: main_identifiability_theorem}

	Motivated by the connection between exponential family DAGs with the same local log-partition maps, in this section, we state our main assumption that generalizes the equal variance assumptions from prior works.

	First, we will need a mild assumption on $W$ that's of similar flavor to causal minimality, but with respect to the Bregman-information we are looking at.
    Denote $\mathcal{A}_W(i)$ to be the non-descendants of $i$ in the graph $W$,

	\begin{assumption}\label{assumption: causal_minimality}
		For all $i\le d$ and all subsets $Y \subseteq \mathcal{A}_W(i)$ such that $\pa(i) \not\subseteq Y$, $\EE[I_{\phi}(X_i | Y)] > \EE[I_{\phi}(X_i | \pa(i))]$.
	\end{assumption}

	This assumption essentially asserts that no edge in $W$ is superfluous with respect to the distribution on $X$. Now, we state our main assumption.

	\begin{assumption}[Equal Bregman-information upon conditioning]\label{assumption: eqinf}
		Assume that for a constant $\tau> 0$,
		\[\EE[I_{\phi}(X_i|\pa(i))] = \EE_w[I_{\phi}(X_i|\pa(i) = w)] = \tau\text{ for all $i \le n$}\]
		where $\pa(i)$ are the parents of $i$ in the underlying DAG $W$.
	\end{assumption}

	\begin{example}[Special case of  ANMs]
		Suppose we are working with an ANM. That is, there is a DAG $W$ such that for all $i \le d$, $X_i = f_i(\pa(i)) + \eps_i$ for some function $f_i$, where $\eps_i$ are jointly independent noise variables. Then, the above assumption says that there is a constant $\tau \ge 0$ such that for all $i$, $I_{\phi}(\eps_i) = \tau$. When $\phi(x) = x^2$, this is the well-known equal variance assumption.
	\end{example}

	We are now ready to state our main theorem.

	\begin{theorem}\label{lem: main_thm}
		Consider a model satisfying \cref{assumption: causal_minimality} and \cref{assumption: eqinf}. Under the Bregman score $\bs(\gr)$,
		the GFBS algorithm with tolerance parameter $\gamma = 0$ will output the true model.
	\end{theorem}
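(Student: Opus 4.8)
The plan is to analyze the forward and backward phases of GFBS separately, and in both to exploit a single clean dichotomy obtained by combining the Markov property with the two assumptions. The starting observation is that for the Bregman score the local cost of assigning parent set $T$ to vertex $i$ is exactly $S_i(e_T) = \EE[I_{\phi}(X_i \mid T)]$, the averaged conditional Bregman information. The key reduction I would establish first is this: whenever $T$ is \emph{ancestrally closed} (every parent of a vertex of $T$ also lies in $T$) and $i \notin T$, then $T \subseteq \mathcal{A}_W(i)$, because if some $j \in T$ were a descendant of $i$ then $i$ would be an ancestor of $j$ and ancestral closedness would force $i \in T$, a contradiction. Consequently, by the local Markov property, if moreover $\pa(i) \subseteq T$ then conditioning on $T$ and on $\pa(i)$ yield the same conditional law of $X_i$, so $S_i(e_T) = \EE[I_{\phi}(X_i \mid \pa(i))] = \tau$ by \cref{assumption: eqinf}; whereas if $\pa(i) \not\subseteq T$ then \cref{assumption: causal_minimality}, applied with $Y = T \subseteq \mathcal{A}_W(i)$, gives $S_i(e_T) > \tau$. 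Thus $S_i(e_T) = \tau$ exactly when $\pa(i) \subseteq T$, and $S_i(e_T) > \tau$ otherwise.

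For the forward phase I would argue by induction that the partial ordering $T$ remains ancestrally closed. At each step, since $W$ is a DAG the subgraph induced on $[d] \setminus T$ has a source, so at least one $i \notin T$ satisfies $\pa(i) \subseteq T$ and attains the minimal value $\tau$; by the dichotomy every minimizer has this property, so regardless of tie-breaking the greedy rule on Line~4 appends a vertex all of whose parents are already ordered, preserving ancestral closedness. After $d$ steps $T$ is a full topological order of $W$, and the graph produced by the forward phase assigns to each $i$ the parent set $\mathrm{Pre}(i)$ of all vertices preceding it in $T$, so that $\pa_W(i) \subseteq \mathrm{Pre}(i) \subseteq \mathcal{A}_W(i)$; in particular this graph contains every true edge.

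For the backward phase I would maintain the invariant $\pa_W(i) \subseteq P_i \subseteq \mathcal{A}_W(i)$, where $P_i$ denotes the current parent set of $i$; it holds initially since $P_i = \mathrm{Pre}(i)$. Because the score is decomposable, deleting an edge $j \to i$ affects only $S_i$, and the dichotomy above applies to every subset of $\mathcal{A}_W(i)$: if $j \notin \pa_W(i)$ then $P_i$ and $P_i \setminus \{j\}$ both contain $\pa_W(i)$, so both have local score $\tau$, the change is $0 = \gamma$, and the edge is removed; if $j \in \pa_W(i)$ then $P_i \setminus \{j\}$ no longer contains $\pa_W(i)$, so by \cref{assumption: causal_minimality} the change is strictly positive and the edge is kept. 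Hence no true parent is ever deleted (so the invariant $\pa_W(i) \subseteq P_i$ persists), and every spurious edge is deleted when the loop reaches it, so the final parent set of each $i$ equals $\pa_W(i)$ and the output is exactly $W$.

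The conceptual crux is the dichotomy of the first paragraph: \cref{assumption: eqinf} collapses all \emph{correct} conditioning sets to the common value $\tau$, making the greedy comparison meaningful up to ties, while \cref{assumption: causal_minimality} supplies the strict gap that both steers the forward phase toward sources and protects true edges during pruning. The main technical care I expect to need is in justifying the Markov reduction $\EE[I_{\phi}(X_i \mid T)] = \EE[I_{\phi}(X_i \mid \pa(i))]$ for $\pa(i) \subseteq T \subseteq \mathcal{A}_W(i)$, using that $\EE[I_{\phi}(X_i \mid Y)]$ depends only on the conditional law of $X_i$ given $Y$, and in checking that the sequential, possibly order-dependent deletions of the backward phase cannot break the invariant; this holds precisely because removing any non-parent leaves the local score unchanged at $\tau$, so the order of deletions is immaterial.
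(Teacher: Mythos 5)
Your proof is correct, and its skeleton --- a score dichotomy, an induction showing the forward phase always appends a source of the remaining subgraph, and an edge-by-edge analysis of the backward phase --- coincides with the paper's own argument. The one genuine difference is how the dichotomy is obtained. The paper proves the theorem for a more general class of functionals $g$ induced by an arbitrary loss $f$ (recovering $I_{\phi}$ via \cref{propn: bregman_minimizer}), and its key lemma (\cref{lem: main_lemma1}) first derives the non-strict inequality $\EE[g(X_i \mid Y)] \ge \tau$ for every non-descendant set $Y$ from a tower-property argument (``conditioning drops value''), importing strictness from the minimality hypothesis only at the end. You bypass that machinery entirely: since \cref{assumption: causal_minimality} is stated for \emph{all} subsets $Y \subseteq \mathcal{A}_W(i)$ with $\pa(i) \not\subseteq Y$, the strict inequality is immediate, and the local Markov property supplies the equality case, so no monotonicity-under-conditioning argument is needed. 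What the paper's generality buys is the ability to subsume non-Bregman identifiability results (NPVAR, QVF-ODS, GHD) under the same proof; what your specialization buys is a shorter, self-contained argument and a more careful backward phase: you maintain the invariant $\pa_W(i) \subseteq P_i \subseteq \mathcal{A}_W(i)$ and note that deletions are order-independent, whereas the paper dispatches the backward phase in two sentences and, in particular, never explicitly argues that true edges survive pruning --- a point your invariant (strict increase when removing a true parent, combined with $\gamma = 0$) handles cleanly.
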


	As stated, the theorem holds for the population setting. The case of finite samples is studied in detail in \cref{sec: sample_complexity}, where we prove the same result given sufficient samples.

	\begin{corollary}\label{cor: main_identifiability}
		A model satisfying \cref{assumption: causal_minimality} and the Equal Bregman-information \cref{assumption: eqinf} is identifiable.
	\end{corollary}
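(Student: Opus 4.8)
The plan is to deduce identifiability directly from \Cref{lem: main_thm} rather than by analyzing the optimization landscape of $\bs$, since the theorem already supplies a constructive recovery guarantee in the population regime, and the paper explicitly declines to claim that the true DAG globally optimizes the score outside the linear case. The first observation I would record is that the Bregman score in \eqref{eq:globalscore}, together with its decomposable summands $S_i$, is defined purely through expectations $\E_X$ taken with respect to $\calD$. Hence every quantity inspected by GFBS in the population setting---the greedy choice $\argmin_{i\notin T} S_i(e_T)$ in the forward phase and the deletion test $S(W^{-e}) - S(W) \le \gamma$ with $\gamma = 0$ in the backward phase---is a deterministic functional of $\calD$ alone. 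In other words, GFBS induces a map $\calD \mapsto \widehat{W}(\calD)$ from distributions to DAGs that references nothing about the generating model beyond $\calD$ itself.

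Next I would cast identifiability as an injectivity (left-inverse) statement and discharge it in one line. Suppose two DAGs $W_1$ and $W_2$, each equipped with parameters, both satisfy \Cref{assumption: causal_minimality} and \Cref{assumption: eqinf} and both induce the \emph{same} observed distribution $\calD$. Applying \Cref{lem: main_thm} to the first model yields $\widehat{W}(\calD) = W_1$, and applying it to the second yields $\widehat{W}(\calD) = W_2$. Since $\widehat{W}(\calD)$ is a single DAG determined by $\calD$, we conclude $W_1 = W_2$, which is precisely the assertion that the model class is identifiable. The entire content of the corollary is thus this reduction composed with the theorem.

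The main obstacle I anticipate is establishing that $\widehat{W}(\calD)$ is genuinely \emph{well-defined}, i.e. that GFBS is an unambiguous deterministic function of $\calD$; without this the map $\calD\mapsto\widehat{W}(\calD)$ is not single-valued and the above argument collapses. Two places need care: ties in the forward $\argmin$, and the order in which edges are processed in the backward phase. For the forward phase I would use \Cref{assumption: eqinf}, which pins the conditional Bregman information of each true parent set to the common value $\tau$, together with \Cref{assumption: causal_minimality}, which forces a strict increase whenever the conditioning set omits a true parent, to argue that the correct next vertex strictly minimizes the greedy criterion at each step, so no tie can perturb the recovered order. For the backward phase with $\gamma = 0$ I would invoke \Cref{assumption: causal_minimality} once more to show that every retained edge is strictly necessary, so deletions are driven to exactly the true edge set independently of processing order. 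These are exactly the facts already proved inside \Cref{lem: main_thm}, so for the corollary it suffices to note that they render $\widehat{W}(\calD)$ unambiguous, and no new estimates are required.
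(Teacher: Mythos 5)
Your core reduction is exactly the paper's (implicit) argument: the paper states \cref{cor: main_identifiability} with no separate proof precisely because it follows immediately from \cref{lem: main_thm} by the injectivity argument you give---if two models satisfying \cref{assumption: causal_minimality} and \cref{assumption: eqinf} induced the same distribution $\calD$, running population GFBS on $\calD$ would have to return both underlying DAGs, forcing them to be equal. That part is correct and is the whole content of the corollary.

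However, your handling of the well-definedness obstacle contains a false sub-claim. You assert that the assumptions force ``the correct next vertex'' to \emph{strictly} minimize the forward-phase criterion, so that no ties occur. This is not true: the assumptions only guarantee that every non-source of the remaining graph has conditional Bregman information strictly larger than $\tau$, while \emph{all} sources of the remaining graph attain exactly $\tau$ (this is the content of \cref{lem: main_lemma1} together with \cref{assumption: eqinf}). If the true DAG has two or more source nodes at some stage---e.g.\ two independent root variables---the argmin is attained by several vertices simultaneously, and the recovered topological order genuinely depends on the tie-breaking rule. What saves the argument is weaker and is exactly what the paper's inductive proof of \cref{lem: main_thm} establishes: \emph{whichever} minimizer is selected at each step is a source of the remaining graph, so every tie-break yields a valid topological sort of the true DAG, and the backward phase with $\gamma=0$ then prunes each vertex's parent set down to its true parents regardless of which valid sort was produced. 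So the map $\calD \mapsto \widehat{W}(\calD)$ is not single-valued through uniqueness of minimizers; rather, its set of possible outputs is the singleton consisting of the true DAG. With that correction your two-line injectivity argument goes through unchanged, and the proof coincides with the paper's.
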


	We defer the proof of the main theorem to the supplement, where we prove it for an even more general class of functionals that subsume the Bregman-information. Here, we make the following remarks regarding this proof.

	\begin{enumerate}
		\item The proof is actually shown for general functionals for which "conditioning drops value". Therefore, we don't need to only work with Bregman-information and we can instead work with many uncertainty measures of distributions that have this property. This is useful, for example, to show that non-Bregman-type models such as the QVF model from \citep{park2017} are identifiable using our framework.
        As a result, Theorem~\ref{lem: main_thm} subsumes several known identifiability results such as EQVAR~\citep{peters2013,chen2018causal}, NPVAR~\citep{gao2020polynomial}, QVF-ODS~\citep{park2017}, and GHD~\citep{park2019identifiability}. See Appendix~\ref{subsec: proof_of_main} for details.
		\item A similar proof could be adapted for other functionals of distributions that measure the randomness or uncertainty of the distribution. One class of examples could be generalized entropies \cite{amigo2018brief} such as the Shannon entropy, R\'{e}nyi entropy or the Tsallis entropy. We leave this for future work.
	\end{enumerate}

	\begin{remk}
		An important reason why our algorithm is efficient is because in line $4$ of Algorithm \ref{algo: gfbs}, we only compute a single score for each vertex not in the ordering so far. This works especially nicely with the Bregman score, precisely because conditioning with respect to more variables only lowers the Bregman information of a variable, as is exploited to prove the theorems above.
	\end{remk}

	\paragraph{A natural score function for non-parametric multiplicative models} We study multiplicative noise models of the form $X_i = f(\pa(i)) \eps_i$ from the perspective of the framework built so far. Examples of such models include growth models from economics and biology \cite{marshall2007life}.
	More specifically, we choose $\phi(x) = -\log x$ for which the Bregman divergence $d_{\phi}$ is the \textit{Itakuro-Saito} distance commonly used in the Signal and Speech processing community. The associated Bregman score is the \textit{Itakuro-Saito} score given by
	\[
	S_{\phi}(W)
	= \sum_{i \le d} (\EE\log \EE[X_i | \pa(i)] - \EE[\log X_i]).
	\]
	Interestingly, the equal Bregman-information assumption reduces purely to an assumption about the noise variables, akin to the equal variance assumption in the case of additive noise models. This suggests that for multiplicative models, the Itakuro-Saito score is naturally motivated from the perspective of identifiability. This gives a new insight into the applicability of score-based learning for multiplicative models, with theoretical foundations in our analysis. For details, see \cref{subsec: score_for_mult_models}.

	\section{Sample complexity}
	\label{sec: sample_complexity}
	To derive a sample complexity bound for GFBS, we first need to compute the Bregman score $\bs$; due to decomposability and \eqref{eq: lem: negative_log_likelihood}, this reduces to estimating the Bregman information $I_\phi$. Let the samples be denoted $(X_1^{(t)}, X_2^{(t)}, \ldots, X_d^{(t)})$ for $t = 1, 2, \ldots, n$. Denote the Bregman information of $X_i$ conditioning on a set $A$ with conditional mean plugged in as (after some calculation)
	\begin{align}
		S(X_i\given A)
		:= \E [I_\phi(X_i\given A)]
		= \E \phi(X_i) - \E \phi(\E (X_i\given A))
	\end{align}
	for some strictly convex, differentiable function $\phi$. To estimate this quantity, we can first apply nonparametric regression to estimate $f_{iA}: = \E (X_i\given A)$ and then take the sample mean:
	\begin{align}\label{eq:S_estimator}
		\widehat{S}(X_i\given A) = \frac{1}{n} \sum_{t \le n} \phi\big(X_i^{(t)}\big) - \frac{1}{n}\sum_{t \le n} \phi \big(\widehat{f}_{iA}(A^{(t)})\big).
	\end{align}
	To show convergence rate of this estimator, we will need some some regularity conditions on $f_{iA}$ and $\phi$. These assumptions are standard in the nonparametric statistics literature, see e.g., \cite[Chapters 1, 3]{gyorfi2002distribution}. First, we recall the definition of the H\"{o}lder class of functions:
	\begin{definition}\label{defn:holder}
		For any $r=(r_1,\cdots,r_d)$, $r_i\in \mathbb{N}$, let $|r|=\sum_i r_i$ and $D^r=\frac{\partial^{|r|}}{\partial x_1^{r_1}\cdots \partial x_d^{r_d}}$. The H\"{o}lder class $\Sigma(s,L)$ is the set of functions satisfying
		\[
		|D^r f(x) - D^r f(y)| \le L\|x-y\|^{s-r}\ \ \ \
		\]
		for all $r$ such that $|r|\le s$ and $x,y\in \mathbb{R}^d$.
	\end{definition}
	\begin{assumption}\label{asmp:regularity}
		Suppose for all $i$ and ancestor sets $A$ of $i$, $f_{iA} \in \Sigma(s,L)$. And suppose $\phi(X_i)$, $\phi(f_{iA})$ and $\phi'(f_{iA})$ all have finite second moments.
	\end{assumption}

	Denote $\mathcal{A}_W(i)$ to be the non-descendants of $i$ in graph $W$, then the following lemma says that we have a uniform estimator for the Bregman score:
	\begin{lemma}\label{lem:S_estimator_rate}
		Suppose the Bregman score and the conditional expectations satisfy Assumption~\ref{asmp:regularity}. Using the estimator defined in \eqref{eq:S_estimator} yields
		\[
		\min_{i\in[d], A\subseteq \mathcal{A}_W(i)}\prob\Big(\big| \widehat{S}(X_i\given A) - S(X_i\given A)\big| \le t\Big) \ge 1 - \frac{\delta^2_n}{t^2}
		\]
		where $\delta^2_n = C(n^{\frac{-2s}{2s+d}} + n^{-1})$ for some constant $C$.
	\end{lemma}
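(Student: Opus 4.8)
The plan is to reduce the stated high-probability bound to a control on the mean-squared error (MSE) of the estimator, uniformly over the pairs $(i,A)$, and then invoke Markov's inequality applied to the squared error. Concretely, I would first establish that for every $i \le d$ and every $A \subseteq \mathcal{A}_W(i)$,
\begin{align*}
\E\big[\big(\widehat{S}(X_i\given A) - S(X_i\given A)\big)^2\big] \le \delta_n^2 = C\big(n^{-2s/(2s+d)} + n^{-1}\big).
\end{align*}
Given such a uniform bound, Markov's inequality yields $\prob(|\widehat{S}(X_i\given A) - S(X_i\given A)| > t) \le \delta_n^2/t^2$ for each fixed $(i,A)$; taking the minimum over $(i,A)$ of the complementary probabilities gives the lemma. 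Note that no union bound is required here, since the \emph{same} $\delta_n^2$ is a valid MSE bound for every pair, so the conclusion is a pointwise-in-$(i,A)$ statement with a common constant.

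Writing $f := f_{iA}$ and $\widehat f := \widehat f_{iA}$ and recalling $S(X_i\given A) = \E \phi(X_i) - \E \phi(f(A))$, I would decompose the error as
\begin{align*}
\widehat{S} - S
&= \Big(\tfrac1n\sum_{t} \phi(X_i^{(t)}) - \E \phi(X_i)\Big)
- \Big(\tfrac1n\sum_{t} \phi(f(A^{(t)})) - \E \phi(f(A))\Big) \\
&\qquad - \tfrac1n\sum_{t}\big(\phi(\widehat f(A^{(t)})) - \phi(f(A^{(t)}))\big),
\end{align*}
and call the three summands $(\mathrm{I})$, $(\mathrm{II})$, $(\mathrm{III})$. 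Terms $(\mathrm{I})$ and $(\mathrm{II})$ are centered averages of i.i.d. quantities with finite second moment under \cref{asmp:regularity}, so $\E[(\mathrm{I})^2]$ and $\E[(\mathrm{II})^2]$ are each $O(1/n)$, producing the $n^{-1}$ term of $\delta_n^2$. For the estimation term $(\mathrm{III})$ I would apply a first-order Taylor expansion, $\phi(\widehat f) - \phi(f) = \phi'(\xi)(\widehat f - f)$ for an intermediate point $\xi$, followed by Cauchy--Schwarz over the samples, which bounds $\E[(\mathrm{III})^2]$ by a factor controlled through the finite second moment of $\phi'(f)$ times the $L^2$ regression risk $\E\,\norm{\widehat f - f}_2^2$. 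Since $f\in\Sigma(s,L)$ and $A$ has at most $d$ coordinates, the standard minimax rate for H\"older classes gives $\E\,\norm{\widehat f - f}_2^2 = O(n^{-2s/(2s+d)})$ \cite[Chapters 1, 3]{gyorfi2002distribution}, supplying the remaining term of $\delta_n^2$. Combining the three pieces through $(a+b+c)^2 \le 3(a^2+b^2+c^2)$ then yields the MSE bound.

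The main obstacle is the rigorous treatment of $(\mathrm{III})$, for two intertwined reasons. First, $\widehat f$ is fit on the very samples that also appear in the empirical average, so the summands are statistically dependent; the clean remedy is sample splitting---estimating $\widehat f$ on one portion of the data and forming the average on an independent portion---so that the regression risk bound can be applied conditionally on $\widehat f$. Second, the Taylor remainder involves $\phi'$ evaluated at the intermediate point $\xi^{(t)}$ between $\widehat f(A^{(t)})$ and $f(A^{(t)})$, whereas \cref{asmp:regularity} only bounds the second moment of $\phi'(f)$; closing this gap requires using the $L^2$-consistency of $\widehat f$ to argue that $\phi'(\xi)$ is with high probability comparable to $\phi'(f)$, or invoking a mild local boundedness of $\phi'$ on the range of interest. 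The variance computations for $(\mathrm{I})$ and $(\mathrm{II})$ are routine; it is this intermediate-point-and-dependence bookkeeping for $(\mathrm{III})$ where the genuine care is needed.
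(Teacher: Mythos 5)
Your proposal follows essentially the same route as the paper's proof: the same decomposition into the centered empirical average of $\phi(X_i)$, the centered empirical average of $\phi(f_{iA})$, and the plug-in estimation term, with parametric $n^{-1}$ rates for the first two, a first-order Taylor expansion combined with the H\"older-class nonparametric regression rate $n^{-2s/(2s+d)}$ for the third, and Markov's inequality to convert the uniform MSE bound into the stated tail bound without any union bound. If anything, you are more careful than the paper, which dispatches exactly the two issues you flag---the dependence of $\widehat f_{iA}$ on the same samples used in the average, and the intermediate point in the Taylor remainder---with the informal remark that higher-order estimation error terms are ``absorbed into the constant.''
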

	Using this estimator, we can bound the sample complexity of the forward pass of GFBS as follows:
	\begin{theorem}[Forward phase of GFBS]\label{thm:samcom:gfbs:forward}
		Suppose the BN satisfies the identifiability condition in Theorem~\ref{lem: main_thm} and assumptions in Lemma~\ref{lem:S_estimator_rate}, denote the gap
		\[
		\Delta := \min_{\substack{i\in[d],A\subseteq \mathcal{A}_W(i)\\
				\pa(i)\not\subseteq A}}S(X_i\given A) - \tau > 0
		\]

		Let the ordering returned by the first phase of GFBS to be $\widehat{\pi} = (\widehat{\pi}_1,\cdots,\widehat{\pi}_d)$. If the sample size
		\[
		n \gtrsim \bigg(\frac{d^2}{\Delta^2 \epsilon}\bigg)^{\frac{2s + d}{2s} \vee 1}
		\]
		then $\prob(\widehat{\pi}\text{ is a valid ordering}) \ge 1 - \epsilon$.
	\end{theorem}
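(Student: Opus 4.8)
The plan is to lift the population argument underlying Theorem~\ref{lem: main_thm} to the plug-in estimator $\widehat S$ of \eqref{eq:S_estimator}, using Lemma~\ref{lem:S_estimator_rate} to control the estimation error and turning the gap $\Delta$ into a decision margin for the $\argmin$ in line~4 of Algorithm~\ref{algo: gfbs}.

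First I would record the population separation that drives the forward phase. Call $T$ a \emph{valid prefix} if it is an initial segment of some topological sort of $W$; equivalently $T$ is ancestrally closed, so $i\notin T$ forces $T\subseteq\mathcal{A}_W(i)$. Fix such a $T$ and $i\notin T$. If $\pa(i)\subseteq T$, then the local Markov property gives $X_i\perp(T\setminus\pa(i))\mid\pa(i)$, hence $\E[X_i\given T]=\E[X_i\given\pa(i)]$ and $S(X_i\given T)=\E\phi(X_i)-\E\phi(\E[X_i\given\pa(i)])=\tau$ by Assumption~\ref{assumption: eqinf}. If instead $\pa(i)\not\subseteq T$, then $(i,T)$ is admissible in the definition of $\Delta$, so $S(X_i\given T)\ge\tau+\Delta$; finiteness of the index set together with Assumption~\ref{assumption: causal_minimality} is what makes $\Delta>0$. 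This is exactly the separation used to prove Theorem~\ref{lem: main_thm}.

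Next I would fix any threshold $t<\Delta/2$, say $t=\Delta/3$, and show by induction on the step that, on a suitable high-probability event, each prefix $T_k$ is valid. Assuming $T_{k-1}$ is valid and every estimate $\widehat S(X_i\given T_{k-1})$ with $i\notin T_{k-1}$ is within $t$ of its population value, a good candidate ($\pa(i)\subseteq T_{k-1}$) has estimated score below $\tau+t$, while every bad candidate exceeds $\tau+\Delta-t>\tau+t$; thus the $\argmin$ picks a good vertex and $T_k$ stays valid. Any good vertex is an admissible next element, so ties are harmless, and after $d$ steps $\widehat\pi$ is a valid ordering.

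Finally I would bound the probability of the accuracy event and solve for $n$. The forward phase makes at most $\binom{d}{2}+d=O(d^2)$ score evaluations, and Lemma~\ref{lem:S_estimator_rate} bounds each single-evaluation failure by $\delta_n^2/t^2=O(\delta_n^2/\Delta^2)$; a union bound over the run gives total failure $O(d^2\delta_n^2/\Delta^2)$. Setting this $\le\epsilon$ yields $\delta_n^2\lesssim\Delta^2\epsilon/d^2$, and inserting $\delta_n^2=C(n^{-2s/(2s+d)}+n^{-1})$ and solving the two resulting inequalities for $n$ produces the exponent $\tfrac{2s+d}{2s}\vee1$. The main obstacle is that the pairs $(i,T_{k-1})$ actually evaluated are \emph{data-dependent}, since $T_{k-1}$ is random; the per-pair guarantee of Lemma~\ref{lem:S_estimator_rate} cannot be applied to $T_{k-1}$ directly, and a naive union over all ancestrally closed sets would cost a factor exponential in $d$. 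I would resolve this by decoupling the step-$k$ estimates from $T_{k-1}$ — e.g.\ splitting the data into $d$ folds, one per forward step, so that conditional on $T_{k-1}$ the step-$k$ estimates use fresh samples and the per-pair bound applies, leaving a union over only the $\le d$ candidates at each of the $d$ steps and hence the clean $O(d^2)$ factor. A secondary point is that Lemma~\ref{lem:S_estimator_rate} supplies only a Chebyshev-type tail, which is exactly why the confidence dependence is polynomial ($1/\epsilon$) rather than logarithmic.
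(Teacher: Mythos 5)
Your proposal follows the same skeleton as the paper's proof: the population separation (sources of the remaining graph have score exactly $\tau$ by the Markov property and \cref{assumption: eqinf}, non-sources at least $\tau+\Delta$), an accuracy threshold $t<\Delta/2$ so that the $\argmin$ in line~4 of Algorithm~\ref{algo: gfbs} still selects a source, a union bound over the $O(d^2)$ score evaluations costing $O(d^2\delta_n^2/\Delta^2)$, and solving $d^2\delta_n^2/\Delta^2\lesssim\epsilon$ to obtain the exponent $\tfrac{2s+d}{2s}\vee 1$ and the $1/\epsilon$ (Chebyshev-type) confidence dependence. The one genuine divergence is the device for handling the randomness of the prefix. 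The paper conditions: it writes $\prob(\mathcal{E}_{j+1}\given\mathcal{E}_j)=\sum_{A}\prob(\mathcal{E}_{j+1}\given\widehat{A}_j=A,\mathcal{E}_j)\prob(\widehat{A}_j=A\given\mathcal{E}_j)$, applies Lemma~\ref{lem:S_estimator_rate} with a union bound over the $d-j$ candidates for each \emph{fixed} $A$, and averages (the weights sum to one, so no exponential blow-up occurs — your worry that one must union over all ancestrally closed sets does not arise in this decomposition). You instead decouple by splitting the data into $d$ folds, one per forward step. Your caution is not misplaced: the paper's step that transfers the \emph{unconditional} tail bound of Lemma~\ref{lem:S_estimator_rate} to the \emph{conditional} probability $\prob(\mathcal{E}_{j+1}\given\widehat{A}_j=A,\mathcal{E}_j)$ is loose as written, since the conditioning event depends on the same data as the estimates; your splitting scheme makes that transfer rigorous. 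But it buys this rigor at a price the theorem does not pay: each fold has only $n/d$ samples, so your requirement becomes $d^2\delta^2_{n/d}/\Delta^2\lesssim\epsilon$, i.e.\ $n\gtrsim d\big(d^2/(\Delta^2\epsilon)\big)^{\frac{2s+d}{2s}\vee 1}$, an extra factor of $d$, and the algorithm you analyze (GFBS with fresh data per step) is no longer Algorithm~\ref{algo: gfbs} as stated. So either adopt the paper's conditioning decomposition to recover the stated rate for the stated algorithm, or present your result explicitly as a guarantee for the sample-split variant with the weaker rate; as written, your argument does not establish the theorem verbatim.
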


	The causal minimality \cref{assumption: causal_minimality} is equivalent to stating $\Delta > 0$. Theorem~\ref{thm:samcom:gfbs:forward} shows that $\Delta$ in fact controls the hardness of the estimation, which is the gap between the minimum Bregman information when all parents are conditioned on and when some parents are missing.

	In this section, to obtain strong bounds on sample complexity, we modify the backward phase of GFBS to be as follows:

	\begin{definition}\label{defn:gfbs:backward}
		Let $\widehat{A}_0=\emptyset$ and for $j \ge 1$, $\widehat{A}_j = \{\widehat{\pi}_i | i = 1, 2, \ldots j\}$. For each $\widehat{\pi}_{j+1}$, we find its parents from $\widehat{A}_j$ in the following way, estimate $S(X_{\widehat{\pi}_{j+1}}\given \widehat{A}_j)$ and $S(X_{\widehat{\pi}_{j+1}}\given \widehat{A}_j \setminus i)$ for $i\in\widehat{A}_j$. Then, set
		\begin{align}
			\label{eq:defn:gfbs:backward}
			\widehat{\pa}(\widehat{\pi}_{j+1})  = \widehat{A}_j \setminus \bigg\{i \in \widehat{A}_j \bigg| |\widehat{S}(X_{\widehat{\pi}_{j+1}}\given \widehat{A}_j) - \widehat{S}(X_{\widehat{\pi}_{j+1}}\given \widehat{A}_j \setminus i)|\le \gamma \bigg\}.
		\end{align}
	\end{definition}

	This says that we keep an edge $(i, \widehat{\pi}_{j+1})$ depending on its influence on the local score at the vertex $\widehat{\pi}_{j+1}$. If the influence is low, then we discard that edge. For our analysis to work, we process these low-influence edges in batches grouped according to the vertices they are oriented towards. In contrast, Algorithm~\ref{algo: gfbs} did not batch the edges and simply processed them one at a time.

	\begin{theorem}[Backward phase of GFBS]\label{thm:samcom:gfbs:backward}
		Suppose the same conditions and sample size in Theorem~\ref{thm:samcom:gfbs:forward} holds, using the backward phase defined in \eqref{eq:defn:gfbs:backward} with $\gamma=\Delta/2$ guarantees $\prob(\widehat{W}=W) \ge 1-\epsilon$.
	\end{theorem}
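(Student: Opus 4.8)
The plan is to condition on the forward phase succeeding and then show that, on a valid ordering, the thresholding rule \eqref{eq:defn:gfbs:backward} at level $\gamma=\Delta/2$ separates true parents from non-parents exactly.

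First I would establish two population-level facts, both flowing from the Markov property combined with \cref{assumption: eqinf} and the definition of $\Delta$. Assume $\widehat{\pi}$ is a valid ordering, so that for each $j$ we have $\widehat{A}_j\subseteq\mathcal{A}_W(\widehat{\pi}_{j+1})$ and $\pa(\widehat{\pi}_{j+1})\subseteq\widehat{A}_j$. (i) If $i\in\widehat{A}_j$ is \emph{not} a parent of $\widehat{\pi}_{j+1}$, then $\widehat{A}_j\setminus i$ still contains all parents and lies in $\mathcal{A}_W(\widehat{\pi}_{j+1})$, so by the Markov property conditioning on $\widehat{A}_j$ or on $\widehat{A}_j\setminus i$ yields the same conditional law of $X_{\widehat{\pi}_{j+1}}$; hence $S(X_{\widehat{\pi}_{j+1}}\given\widehat{A}_j)=S(X_{\widehat{\pi}_{j+1}}\given\widehat{A}_j\setminus i)=\tau$ and the true gap is $0$. (ii) If $i$ \emph{is} a parent, then $\widehat{A}_j\setminus i$ omits a parent while staying inside $\mathcal{A}_W(\widehat{\pi}_{j+1})$, so the definition of $\Delta$ gives $S(X_{\widehat{\pi}_{j+1}}\given\widehat{A}_j\setminus i)\ge\tau+\Delta$ while $S(X_{\widehat{\pi}_{j+1}}\given\widehat{A}_j)=\tau$, making the true gap at least $\Delta$. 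Consequently, at the population level, rule \eqref{eq:defn:gfbs:backward} with any $\gamma\in(0,\Delta)$ recovers $\pa(\widehat{\pi}_{j+1})$ for every $j$, hence recovers $W$.

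Next I would transfer this to finite samples. It suffices that every estimate $\widehat{S}$ used in \eqref{eq:defn:gfbs:backward} lie within $t$ of its population value for some $t<\Delta/4$: then every estimated gap is within $2t<\Delta/2=\gamma$ of the true gap, so a non-parent's estimated gap is below $\gamma$ (edge removed) and a true parent's estimated gap exceeds $\gamma$ (edge kept). By \cref{lem:S_estimator_rate} each such estimate deviates by more than $t$ with probability at most $\delta_n^2/t^2$, and the backward phase computes only $O(d^2)$ of them (the baseline $\widehat{S}(X_{\widehat{\pi}_{j+1}}\given\widehat{A}_j)$ together with one $\widehat{S}(X_{\widehat{\pi}_{j+1}}\given\widehat{A}_j\setminus i)$ per $i\in\widehat{A}_j$), all conditioning on non-descendant sets so that the lemma applies. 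A union bound then controls the backward-phase failure probability by $O(d^2\delta_n^2/\Delta^2)$, taking $t$ of order $\Delta$. Finally I would combine this with the forward-phase failure bound from \cref{thm:samcom:gfbs:forward} by a further union bound, splitting the budget $\epsilon$ between the two phases; substituting $\delta_n^2=C(n^{-2s/(2s+d)}+n^{-1})$ and requiring $d^2\delta_n^2/\Delta^2\lesssim\epsilon$ reproduces exactly the sample-size condition $n\gtrsim(d^2/(\Delta^2\epsilon))^{(2s+d)/(2s)\vee 1}$ already assumed, so the backward phase incurs no extra sample complexity and $\prob(\widehat{W}=W)\ge 1-\epsilon$.

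The main obstacle is the population separation in step one: one must argue carefully that conditioning on a non-descendant superset of the parents leaves $S(X_{\widehat{\pi}_{j+1}}\given\cdot)$ pinned at $\tau$ (so non-parents give an exactly vanishing gap), and match this against the definition of $\Delta$ to lower-bound the parent gaps. Everything afterward is a routine concentration-plus-union-bound argument; the only care points are choosing $t$ strictly below $\Delta/4$ so that genuine parents land strictly above $\gamma$ while non-parents fall weakly below it, and counting the $O(d^2)$ estimates so the union bound does not inflate the required sample size.
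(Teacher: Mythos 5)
Your proposal is correct and follows essentially the same route as the paper's proof: population-level separation (gap $=0$ for non-parents by the Markov property, gap $\ge\Delta$ for parents by the definition of $\Delta$), then \cref{lem:S_estimator_rate} plus a union bound over the $O(d^2)$ estimates to show the threshold $\gamma=\Delta/2$ separates the two cases, finally combining with the forward-phase guarantee. Your insistence on $t<\Delta/4$ strictly is in fact slightly more careful than the paper's ``$2t\le\Delta/2$,'' since the removal rule \eqref{eq:defn:gfbs:backward} uses a non-strict inequality and a parent whose estimated gap equals exactly $\gamma$ would otherwise be deleted.
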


	Proofs can be found in \cref{app:samcom} in the supplement.

	\section{Experiments}\label{sec: expt}
	We conduct experiments to show the performance of GFBS on optimizing the Bregman score. We compare GFBS with existing score-based DAG learning algorithms: Gobnilp \citep{cussens2012}, NOTEARS \citep{zheng2020learning}, and GDS \citep{peters2013}. The implementation of these algorithms and data generating process are detailed in Appendix~\ref{app:expt}. Although previous works have evaluated the structure learning performance of special cases of GFBS such as equal variances, we also include these comparisons in the appendix for completeness.
	Also, in Appendix~\ref{app:unequal}, we investigate the performance of GFBS on models which violate the identifiability Assumption~\ref{assumption: eqinf}.

	\begin{itemize}
		\item \textit{Choice of $\phi$}. To show the generality of the Bregman score \eqref{eq:globalscore}, we investigate two convex functions to define the score: $\phi_1(x)=x^2$ and $\phi_2(x)=-\log x$. They correspond to sum of residual variances and sum of residual Itakuro-Saito (IS) distances respectively.
		\item \textit{Graph type}. We generate three types of graphs: Markov chains (MC), Erd\"os-R\'enyi (ER) graphs, Scale-Free (SF) graphs with different expected number of edges. We let the expected number of edges scale with $d$, e.g. ER-2 stands for Erd\"os-R\'enyi with $2d$ edges.
		\item \textit{Model type}. We simulate the data as $X_i=f_i(\pa(i))+ Z_i$ or $X_i=f_i(\pa(i))\times Z_i$ for different $\phi$'s, where $Z_i$ is independently sampled from some distribution such that Assumption~\ref{assumption: eqinf} is satisfied. Then we consider the following forms of the parental functions $f_i$: linear (LIN), sine (SIN), additive Gaussian process (AGP), and non-additive Gaussian process (NGP).
	\end{itemize}

	The main objective of these experiments is to evaluate the performance of these algorithms in optimizing the score: For this, it is necessary to compute the globally optimal score as a benchmark, which is computationally intensive. As a result, our experiments are restricted to $d=5$. We use Gobnilp \citep{cussens2012} to compute the global minimizer. The results are shown in Figure~\ref{fig:main}. As expected, GFBS returns a near-globally optimal solution in most cases when the sample size is large. Due to finite-sample errors, in some cases (notably on the IS score), GFBS returns a slightly higher score due to the backward phase, which allows the score to increase slightly in favour of sparser solutions. At a technical level, the issue is that the score does not distinguish I-maps from minimal I-maps, and this is exacerbated on finite samples. Better regularization and parameter tuning should resolve this, which we leave to future work. Nonetheless, the close alignment between GFBS and the globally optimal score suggest that GFBS---and hence the equal variance algorithm---is truly minimizing the score.

	\begin{figure}[h]
		\centering
		\includegraphics[width=\linewidth]{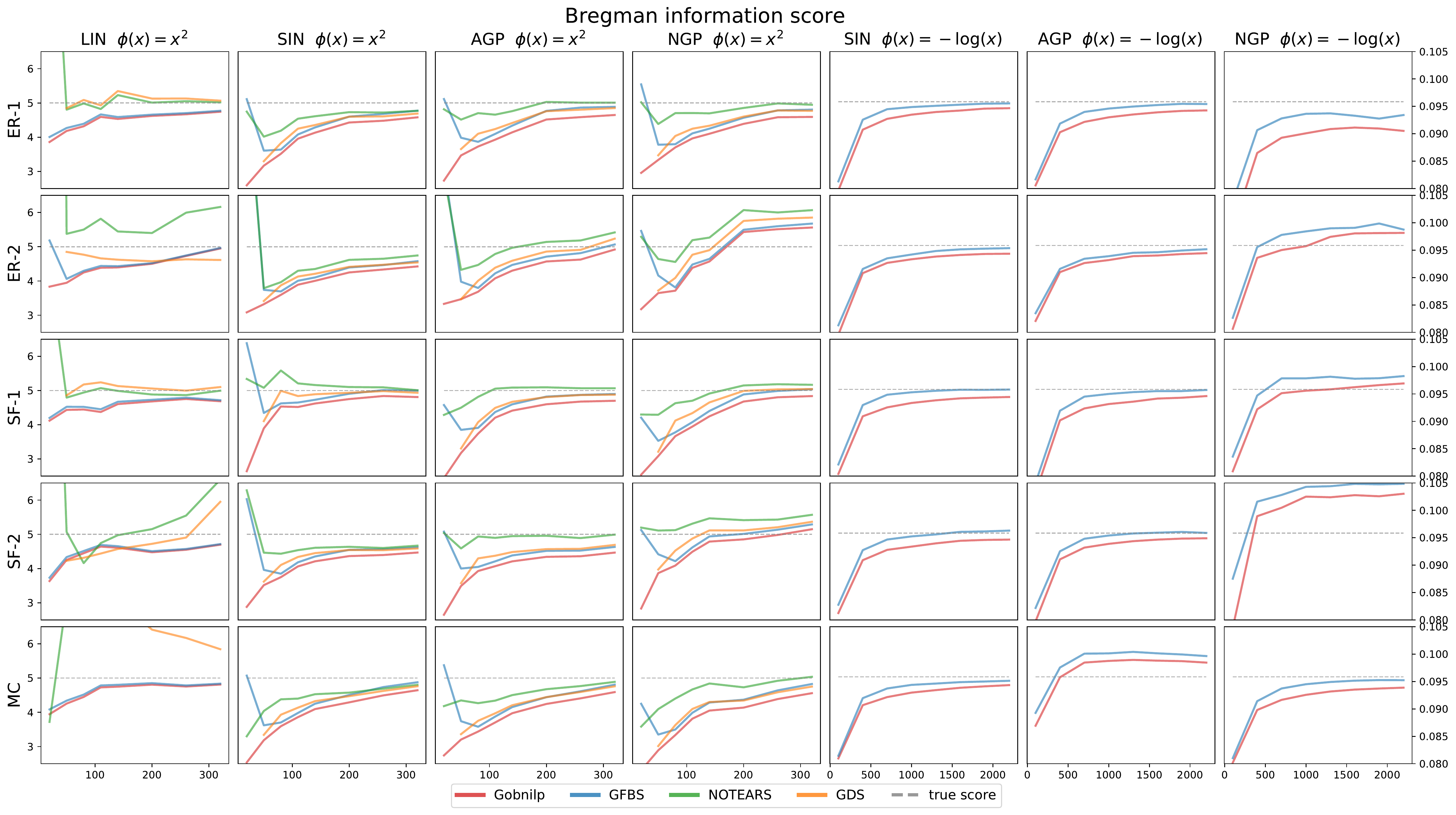}
		\caption{Score of output DAG vs. sample size $n$ for GFBS and 3 other algorithms. Left four columns: $\phi_1  (x)=x^2$ and $Z_i$ is $t$-distribution with variance $1$; Right three columns: $\phi_2(x)=-\log(x)$ and $Z_i$ is uniform distribution in $[1,2]$. The two sets of columns have different Y-axis scales. The grey dashed line is the score of the true graph.}
		\label{fig:main}
	\end{figure}

	\section{Discussion}

	We introduced the generic GFBS (Greedy Forward-Backward Search) algorithm for score-based DAG learning. It enjoys the guarantees of always outputting a DAG and running in time polynomial in the input size and the time required to compute the score function. We also showed statistical and sample complexity bounds for this algorithm for the generic Bregman score. We motivate this score by formally connecting it to the negative log-likelihood for all exponential DAG models, and considering the well-known approximation capabilities of exponential families, we expect that the Bregman score and our theoretical results apply to a wide variety of settings. In particular, the Bregman score generalizes the least squares score. For least-squares score, our sample complexity results unify and match or improve existing results such as \cite{ghoshal2017ident, chen2018causal, gao2020polynomial}. For generic Bregman scores, no sample complexity results were known prior to this work to the best of our knowledge and we provide the first such results.

	The GFBS algorithm also generalizes several prior works on greedy order-based algorithms for DAG learning, e.g., \cite{chen2018causal, gao2020polynomial, park2017, park2019identifiability}. Existing score-based greedy algorithms (such as GES or hill climbing) are edge-based, whereas these recent order-based algorithms are vertex-based. GFBS shows that each of these prior works can be re-interpreted as score-based greedy algorithms, each of which optimizes a different score.
    This brings them back under the umbrella of score-based learning. In our statistical guarantees, our assumptions generalize the equal variance assumption that has been studied in the literature in the last decade. Moreover, as a byproduct of our work, we also propose a new score function, the Itakuro-Saito score, for multiplicative SEM models and we leave it to future work to further explore the properties of this score function.

	For other future work, it would be insightful to compare \cref{assumption: causal_minimality} to the standard notions such as causal minimality. Moreover, our experiments suggest that the various assumptions we make are not strictly necessary, so an interesting future direction is to study weaker conditions under which GFBS globally optimizes the score.

	\subsection*{Broader impacts}
	Learning graphical models has important applications in causal inference, which is useful for mitigating bias in ML models.
	At the same time, causal models can be easily misinterpreted and provide a false sense of security, especially when they are subject to finite-sample errors.
	One additional potential negative impact from this line of work is the environmental cost of training large causal models, which can be expensive and time-consuming.

	\subsection*{Acknowledgements}
	We thank anonymous reviewers for their helpful comments in improving the manuscript.
	G.R. was partially supported by NSF grant CCF-1816372.
    B.K. was partially supported by advisor L\'aszl\'o Babai's NSF grant  CCF 1718902.
    B.A. was supported by NSF IIS-1956330, NIH R01GM140467, and the Robert H. Topel Faculty Research Fund at the University of Chicago Booth School of Business.
    All statements made are solely due to the authors and have not been endorsed by the NSF.

	\bibliography{greedydag}
	\bibliographystyle{abbrvnat}

	\newpage
	\appendix

    {\Large\bf\centering Supplementary Material for ``Structure learning in polynomial time: Greedy algorithms, Bregman information, and exponential families''}

	\section{Comparison to GES}\label{subsec: comparison_to_ges}

	In order to compare GFBS to existing algorithms, in this appendix we present examples to compare the output of GFBS to GES.
	We first consider a model where this is some ambiguity in the outputs, and then exhibit a model where they always differ. The key takeaway is that GFBS really is a distinct algorithm from GES.

	\subsection{A setting where GES sometimes differs from GFBS}

	We will first consider the following standard example of a non-faithful distribution used in prior works \cite{peters2014identifiability} and show how GES differs from GFBS.

	Consider a distribution generated as $X_1 = N_1, X_2 = -X_1 + N_2, X_3 = X_1 + X_2 + N_3$ where $N_1, N_2, N_3$ are independent standard Gaussians. We will consider the score function
	\[S(W) = \sum_{i \le 3} (X_i - \sum_{j \in \pa(i)} W_{ji}X_j)^2\]
	to be minimized over all matrices $W$ whose support is a DAG.

	In the second forward step of GES, there are two equivalence classes that GES could have ended up with because they have the same scores, depending on how the tie is broken. One of them is the graph $X_1 \longrightarrow X_2 \longleftarrow X_3$ and the other is the graph $X_1 \longrightarrow X_2 \longrightarrow X_3$. If GES picked the former and continued with the algorithm, then it ultimately outputs the correct DAG. But if GES picked the latter which it very well could have, then it ends up outputting the wrong DAG $X_1\longrightarrow X_2 \longleftarrow X_3$ at the end of the algorithm.

	On the other hand, as shown in \cref{subsec: identifiability_and_recovery_via_gfbs} and \cref{sec: sample_complexity}, in both the population setting and the empirical setting for a reasonable sample size, GFBS will provably always output the correct DAG for this distribution since the residual variances are equal.

	We also considered the Gaussian BIC score that is traditionally used. In $100$ experiments under this score, GES fails all the time (also observed in prior works, for e.g. \cite{peters2014identifiability}) and outputs $X_1\longrightarrow X_2 \longleftarrow X_3$. But we note that GFBS succeeded in all $100$ experiments, although we do not give theoretical guarantees for this regularized score.

	\subsection{A setting where GES always differs from GFBS}

	We will tweak the weights of the model from the prior section and show that for this model, GES will always fail whereas GFBS will always succeed for essentially the same reason: Residual variances are equal.

	Consider a distribution generated as $X_1 = N_1, X_2 = -X_1 + N_2, X_3 = 0.9 X_1 + 0.9 X_2 + N_3$. We consider the same score function. We manually verify that GES will always output the DAG $X_1 \longrightarrow X_2 \longleftarrow X_3$ in the population setting. In $100$ experiments, GES also always outputted the same wrong DAG. Contrast this to GFBS which will always output the correct DAG in the population setting as well as the empirical setting with a reasonable number of samples.

	Finally, under the Gaussian BIC score, in $100$ experiments, GES always outputted the wrong DAG and GFBS always outputted the correct DAG, although we do not give theoretical guarantees in general for this phenomenon with the regularized score.

	\section{Bregman divergences, Bregman information and Legendre duality}\label{subsec: bregman_div}

	This set of definitions broadly follow the presentation of \cite{banerjee2005clustering}, but is specialized to our setting. Fix a strictly convex, differentiable function $\phi: \RR \to \RR$.

	\begin{definition}
		Define $d_{\phi}: \RR \times \RR \to \RR$ to be the Bregman-divergence of $\phi$ defined as
		\[d_{\phi}(x, y) = \phi(x) - \phi(y) - (x - y) \phi' (y)\]
		where $\phi'$ is the derivative of $\phi$.
	\end{definition}

	The Bregman-divergence is a general notion of distance that generalizes Squared Euclidean distance, Logistic Loss, Itakuro-Saito distance, KL-divergence, Mahalanobis distance and Generalized I-divergence, among others \cite{banerjee2005clustering}. In particular, it is nonnegative and is equal to $0$ if and only if the two arguments are equal.

	Of particular interest to us, in order to see how it connects to prior works on causal DAG learning, we illustrate with the following example that shows how the variance is a special case of the Bregman-information.

	\begin{example}\label{example: variance}
		Suppose $\phi(x) = x^2$. Then, $d_{\phi}(x, y) = (x- y)^2$ and $I_{\phi}(\calD) = \EE[(x - \EE[x])^2] = \var(\calD)$.
	\end{example}

	When we study multiplicative DAG models, we study another kind of Bregman-information that arises from the Itakura-Saito distance from signal processing theory. We explore this in more detail in \cref{subsec: score_for_mult_models}.

	\begin{example}\label{example: itakuro-saito}
		Assume the domain of the distribution and let $\phi : \RR^+ \to \RR$ be strictly convex. Suppose $\phi(x) = - \log x$. Then, $d_{\phi}(x, y) = \frac{x}{y} - \log\frac{x}{y} - 1$ and $I_{\phi}(\calD) = \EE[\frac{x}{\EE[x]} - \log\frac{x}{\EE[x]} - 1] = \log \EE[x] - \EE[\log x]$.
	\end{example}

	Any Bregman divergence defines a corresponding \emph{Bregman information}:
	\begin{definition}
		For a distribution $\calD$ over the reals, define the Bregman-information of $\phi$ as
		\[I_{\phi}(\calD) = \EE_{x \sim \calD}[d_{\phi}(x, \mu)]\]
		where $\mu = \EE_{x \sim \calD} [x]$ is the mean. For a random variable $X$ whose range is distributed as $\calD$ over $\RR$, we naturally define $I_{\phi}(X) := I_{\phi}(\calD)$
	\end{definition}

	The Bregman-information of a distribution is a measure of randomness of the distribution, that's associated with $\phi$. Among others, it generalizes the variance, the mutual-information and the Jensen-Shannon divergence of Gaussian processes \cite{banerjee2005clustering}.

	Bregman divergences have the following nice property that we will exploit in our analysis.

	\begin{proposition}[{\cite[Proposition ~1]{banerjee2005clustering}}]\label{propn: bregman_minimizer}
		The optimization problem
		\[ \min_{y \in \RR}\EE_{x \sim \calD} [d_{\phi}(x, y)]\]
		has a unique minimizer at $y = \EE_{x \sim \calD}[x]$.
	\end{proposition}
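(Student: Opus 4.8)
The plan is to exploit a simple telescoping identity, sometimes called the generalized Pythagorean property of Bregman divergences, which reduces the entire optimization to the nonnegativity of a single Bregman divergence. Write $\mu = \EE_{x\sim\calD}[x]$ for the mean. First I would expand the objective using linearity of expectation and the definition $d_{\phi}(x,y) = \phi(x) - \phi(y) - (x-y)\phi'(y)$, obtaining
\[
\EE_{x\sim\calD}[d_{\phi}(x,y)] = \EE_{x\sim\calD}[\phi(x)] - \phi(y) - (\mu - y)\phi'(y),
\]
where the cross term simplifies because $\EE_{x\sim\calD}[x-y] = \mu - y$ is linear in $x$.

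The key step is to subtract the value of the objective at $y = \mu$. Since $d_{\phi}(x,\mu) = \phi(x) - \phi(\mu)$ has no surviving linear term at $\mu$ (the coefficient $(\mu-\mu)$ vanishes), we have $\EE_{x\sim\calD}[d_{\phi}(x,\mu)] = \EE_{x\sim\calD}[\phi(x)] - \phi(\mu)$. Subtracting, the data-dependent term $\EE_{x\sim\calD}[\phi(x)]$ cancels and I am left with
\[
\EE_{x\sim\calD}[d_{\phi}(x,y)] - \EE_{x\sim\calD}[d_{\phi}(x,\mu)] = \phi(\mu) - \phi(y) - (\mu - y)\phi'(y) = d_{\phi}(\mu, y),
\]
which is exactly the Bregman divergence between $\mu$ and $y$.

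To finish, I would invoke the fact (noted immediately after the definition of $d_{\phi}$) that a Bregman divergence of a strictly convex function is nonnegative and vanishes if and only if its two arguments coincide. Hence $\EE_{x\sim\calD}[d_{\phi}(x,y)] = \EE_{x\sim\calD}[d_{\phi}(x,\mu)] + d_{\phi}(\mu,y) \ge \EE_{x\sim\calD}[d_{\phi}(x,\mu)]$, with equality precisely when $y = \mu$. This simultaneously establishes that $y = \mu$ is a minimizer and that it is the unique one.

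There is essentially no hard obstacle here; the only point requiring care is the clean telescoping, i.e.\ recognizing that the difference of the two expected divergences collapses to $d_{\phi}(\mu, y)$ because the expectation of the linear term $(x-y)\phi'(y)$ depends on $x$ only through $\mu$. One alternative route is purely via calculus: differentiating the objective in $y$ gives $(y - \mu)\phi''(y)$, whose unique sign change at $y = \mu$ (using $\phi'' > 0$ from strict convexity) pins down the minimizer; but the telescoping argument is cleaner and delivers uniqueness without any appeal to regularity beyond differentiability.
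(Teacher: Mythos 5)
Your proof is correct. One point of context: the paper itself gives no proof of this proposition --- it is imported verbatim as Proposition~1 of the cited reference (Banerjee et al.), so there is no internal argument to compare against. Your telescoping argument is, in fact, essentially the proof given in that reference: expanding the objective, the cross term $\EE[(x-y)\phi'(y)]$ depends on $x$ only through $\mu$, so the excess objective at $y$ over the objective at $\mu$ collapses to $d_{\phi}(\mu,y)$, and strict convexity of $\phi$ gives $d_{\phi}(\mu,y)>0$ whenever $y\neq\mu$, which delivers minimality and uniqueness in one stroke (this nonnegativity/equality fact is exactly what the paper records immediately after defining $d_\phi$). Two small points of hygiene. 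First, your intermediate claim ``$d_{\phi}(x,\mu)=\phi(x)-\phi(\mu)$'' is not true pointwise --- the linear term $-(x-\mu)\phi'(\mu)$ does not vanish for an individual $x$; it vanishes only after taking expectation, since $\EE[(x-\mu)\phi'(\mu)]=(\mu-\mu)\phi'(\mu)=0$. Your conclusion $\EE[d_{\phi}(x,\mu)]=\EE[\phi(x)]-\phi(\mu)$ is right, but it should be stated as an identity in expectation, not as a pointwise identity. Second, the cancellation of $\EE[\phi(x)]$ implicitly assumes this quantity is finite; if $\EE[\phi(x)]=+\infty$ the objective is identically $+\infty$ and the claim is vacuous, so a finiteness assumption (which the cited reference also makes) deserves a word. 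Your alternative calculus route is also valid but, as you note, requires $\phi''$ to exist, so the telescoping argument is the right one to lead with.
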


	We note the following:
	\begin{enumerate}
		\item \Cref{propn: bregman_minimizer} is surprising because $d_{\phi}(x, y)$ is convex with respect to the first argument $x$ but not necessarily with respect to the second argument $y$.
		\item Bregman-divergences are the only functionals with this property, i.e. the converse of \cref{propn: bregman_minimizer} is also true \cite[Appendix~B]{banerjee2005clustering}
		\item Bregman-information have many other nice properties that make them a useful analytic measure for studying randomness or uncertainty of distributions. See \cite{banerjee2005clustering} for details.
	\end{enumerate}

	Now, we briefly review the theory of Legendre duality that will be used in the sequel.

	\begin{definition}\label{def: dual_fn}
		For a function $\psi : \RR \to \RR$, define the dual function $\psi^*$ as
		\[\psi^*(t) = \sup_{\theta \in \RR} (t\theta - \psi(\theta))\]
	\end{definition}

	\begin{proposition}\label{prop: alt_def_of_dual}
		For a strictly convex, differentiable function $\psi : \RR \to \RR$,
		\[\psi^*(t) = tf(t) - \psi(f(t))\]
		where $f(t) = (\psi')^{-1}(t)$.
	\end{proposition}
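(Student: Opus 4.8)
The plan is to show that the supremum defining $\psi^*(t)$ in \Cref{def: dual_fn} is in fact a maximum, attained uniquely at $\theta = f(t) = (\psi')^{-1}(t)$, from which the closed form follows by direct substitution. Fix $t \in \RR$ and consider the objective $g(\theta) := t\theta - \psi(\theta)$ appearing inside the supremum. Since $\psi$ is strictly convex, $-\psi$ is strictly concave, and adding the affine term $t\theta$ preserves this, so $g$ is a strictly concave, differentiable function of $\theta$. The key structural fact I would use is that a strictly concave differentiable function has at most one critical point, and that critical point, if it exists, is necessarily the unique global maximizer.

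Next I would locate that critical point. Differentiating, $g'(\theta) = t - \psi'(\theta)$, so the first-order condition $g'(\theta) = 0$ reads $\psi'(\theta) = t$. Because $\psi$ is strictly convex, its derivative $\psi'$ is strictly increasing and hence injective, so $(\psi')^{-1}$ is well-defined on the range of $\psi'$; the unique solution is therefore $\theta^* = (\psi')^{-1}(t) = f(t)$. Invoking the strict concavity of $g$, this stationary point is the global maximizer, so the supremum is attained there. Substituting $\theta = f(t)$ into $g$ then gives
\[
\psi^*(t) = g(f(t)) = t f(t) - \psi(f(t)),
\]
which is exactly the claimed formula.

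The only real subtlety — and the step I would treat most carefully — is that the proposition only assumes $\psi$ is differentiable, not twice differentiable, so I would avoid any second-derivative test and instead argue the maximality purely from strict concavity of $g$ together with the first-order condition. A secondary point worth a remark is the implicit assumption that $t$ lies in the range of $\psi'$, so that $f(t)$ is defined and the supremum is genuinely attained rather than approached only as $\theta \to \pm\infty$; under the standing hypotheses this is precisely the regime in which $\psi^*(t)$ is finite, which is the case relevant to the Legendre duality used in \Cref{lem: negative_log_likelihood}. Everything else is a routine substitution, so I do not expect any serious obstacle beyond this bookkeeping.
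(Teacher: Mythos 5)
Your proof is correct and follows essentially the same route as the paper's: set the derivative of $t\theta - \psi(\theta)$ to zero, use the strict monotonicity of $\psi'$ (from strict convexity) to identify the unique stationary point $\theta^* = (\psi')^{-1}(t) = f(t)$, and substitute back. You are somewhat more careful than the paper---explicitly invoking strict concavity to certify the stationary point as the global maximizer and flagging that $t$ must lie in the range of $\psi'$---but these are refinements of the same argument, not a different one.
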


	\begin{proof}
		Since $\psi$ is strictly convex and differentiable, $\psi'$ is monotonic and hence invertible, so $f$ is well-defined. Now, we can set the derivative of $t\theta - \psi(\theta)$ to zero to obtain that the maximizer $\theta^*$ in \cref{def: dual_fn} satsifies \[t = \psi'(\theta^*) \Longrightarrow \theta^* = f(t)\] Plugging this back in gives the result.
	\end{proof}

	We also note that when $\psi$ is strictly convex and differentiable, $\psi^*$ is also a strictly convex, differentiable function and $(\psi^*)^* = \psi$.

	An exponential random family (ERF) is a parametric family of distributions parametrized by the natural parameter $\theta$ with log partition function $\psi$ whose density is given by
	\[p_{(\psi, \theta)}(x) = \exp(x\theta - \psi(\theta))p_{0}(x)\]

	The log partition function $\psi$ must be strictly convex and differentiable. This is a general family of distributions that subsumes many standard families of parametric distributions such as the Gaussian distribution, the Poisson distribution and the Bernoulli distribution.

	Equivalently, the family could be parameterized by its expectation parameter $\mu = \EE_{x \sim p_{(\psi, \theta)}}[x]$

	\begin{fact}[\cite{barndorff2014information, amari1995information}]\label{fact: duality}
		For an ERF with natural parameter $\theta$, mean parameter $\mu$, log partition function $\psi$ and dual function $\phi = \psi^*$, we have the following duality:
		\[\mu = \psi'(\theta), \qquad \theta = \phi'(\mu)\]
		Therefore, $\phi'$ and $\psi'$ are inverses of each other.
	\end{fact}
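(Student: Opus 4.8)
The plan is to establish the two stated identities separately and then deduce the inverse relationship from them. The underlying object is the ERF density $p_{(\psi,\theta)}(x) = \exp(x\theta - \psi(\theta))p_0(x)$, and the entire argument rests on the normalization constraint together with the Legendre duality machinery already developed above.

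First I would prove $\mu = \psi'(\theta)$. Because $p_{(\psi,\theta)}$ is a probability density, the requirement that it integrate to one forces $\exp(\psi(\theta)) = \int \exp(x\theta)\,p_0(x)\,dx$, so that $\psi$ is exactly the cumulant-generating (log-partition) function $\psi(\theta) = \log\int \exp(x\theta)\,p_0(x)\,dx$. Differentiating this identity with respect to $\theta$ and simplifying, the factor $\exp(\psi(\theta))$ in the denominator recombines with the integrand to yield $\psi'(\theta) = \int x\,\exp(x\theta - \psi(\theta))\,p_0(x)\,dx = \EE_{x\sim p_{(\psi,\theta)}}[x] = \mu$.

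Next I would prove $\theta = \phi'(\mu)$ using \cref{prop: alt_def_of_dual}, which already expresses the dual as $\phi(t)=\psi^*(t)=t\,f(t)-\psi(f(t))$ with $f=(\psi')^{-1}$. Differentiating in $t$ gives $\phi'(t) = f(t) + t\,f'(t) - \psi'(f(t))\,f'(t)$, and since $\psi'(f(t))=t$ by the definition of $f$, the last two terms cancel, leaving $\phi'(t)=f(t)=(\psi')^{-1}(t)$. Evaluating at $t=\mu$ and invoking the first identity $\mu=\psi'(\theta)$ then yields $\phi'(\mu)=(\psi')^{-1}(\psi'(\theta))=\theta$. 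The closing claim that $\phi'$ and $\psi'$ are mutual inverses is immediate: composing $\mu=\psi'(\theta)$ with $\theta=\phi'(\mu)$ shows each undoes the other on the relevant domain.

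The only genuinely delicate point is the interchange of differentiation and integration in the computation of $\psi'(\theta)$; this is justified by the standard smoothness of exponential families on the interior of their natural parameter space, where the integral defining $\psi$ converges locally uniformly and may be differentiated under the integral sign. Granting that regularity, which holds for all the parametric families we consider, the remainder of the argument is routine algebra resting entirely on the Legendre duality results established earlier in this section.
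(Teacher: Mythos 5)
The paper does not prove \cref{fact: duality} at all---it is imported as a known result with citations to the information-geometry literature---so your derivation is not mirroring anything in the text; it is new, self-contained content, and it is essentially correct. The first identity $\mu=\psi'(\theta)$ is the standard cumulant computation: normalization forces $\psi(\theta)=\log\int e^{x\theta}p_0(x)\,dx$, and differentiating under the integral sign gives $\psi'(\theta)=\EE[x]$. The second identity correctly leverages \cref{prop: alt_def_of_dual}, and deriving $\phi'=(\psi')^{-1}$ before specializing to $t=\mu$ is the right order of operations, since the closing ``inverses of each other'' claim then comes for free rather than needing a separate composition argument. One regularity point deserves more care than you give it: your chain-rule computation $\phi'(t)=f(t)+tf'(t)-\psi'(f(t))f'(t)$ presupposes that $f=(\psi')^{-1}$ is differentiable, which does not follow from the strict convexity and differentiability assumed in \cref{prop: alt_def_of_dual} alone (it requires, e.g., $\psi''>0$ together with the inverse function theorem). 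For exponential families this is harmless---$\psi$ is analytic with $\psi''$ equal to the variance, positive on the interior of the natural parameter space, the same regularity you already invoke to justify differentiating under the integral---but you can sidestep the issue entirely: for fixed $t_0$, the affine function $t\mapsto tf(t_0)-\psi(f(t_0))$ lies below $\phi$ by the supremum definition of $\psi^*$ and touches it at $t_0$, so differentiability of $\phi$ (recorded in the paper just after \cref{prop: alt_def_of_dual}) forces $\phi'(t_0)=f(t_0)$, with no derivative of $f$ ever needed.
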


	For a more general treatment of Legendre duality, see \cite{banerjee2005clustering}.

	\section{Proof of \cref{lem: negative_log_likelihood}} \label{sec: proof_of_nll}

	When the local conditional probability comes from an exponential family, $\pr(X_i | \pa(i)) \sim \ERF(\psi_i, g_i)$ with log-partition function $\psi_i$ and mean function $g_i(\pa(i))$, we can write the density as
	\begin{align}
		p_{g_{i},\psi_i}(X_{i}\given \pa(i)) = p_{(\psi_i, \theta_i)}(X_{i}\given \pa(i)) = \exp\Big\{X_{i}\theta_{i}(\pa(i)) - \psi(\theta_{i}(\pa(i)))\Big\}p_{0}(X_{i})
	\end{align}
	where $\theta_i(\pa(i))$ is the natural parameter corresponding to the mean parameter $g_i(\pa(i))$ associated with an ERF. Note that our notation is consistent with the notation from the previous section.

	We need to relate this expression to the Bregman-divergence. Towards that, we have the following lemma.

	\begin{lemma}\label{lem: div_to_prob}
		For an ERF with density $p_{(\psi, \theta)}$ with mean parameter $\mu$, we have
		\[d_{\phi}(x, \mu) = -\log p_{(\psi, \theta)}(x) + \phi(x) + \log p_0(x).\]
	\end{lemma}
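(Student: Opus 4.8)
The plan is to prove the identity by a direct computation that expands both sides and reduces everything to the Fenchel--Young equality relating $\phi$ and its dual $\psi$ at conjugate points. First I would expand the right-hand side using the ERF density $p_{(\psi,\theta)}(x) = \exp(x\theta - \psi(\theta))p_0(x)$, so that $-\log p_{(\psi,\theta)}(x) = -x\theta + \psi(\theta) - \log p_0(x)$. Adding $\phi(x) + \log p_0(x)$ cancels the base-measure term $\log p_0(x)$ entirely, leaving the right-hand side equal to $\phi(x) - x\theta + \psi(\theta)$.

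Next I would expand the left-hand side from the definition of the Bregman divergence, $d_\phi(x,\mu) = \phi(x) - \phi(\mu) - (x-\mu)\phi'(\mu)$. Using the duality relation $\theta = \phi'(\mu)$ from \cref{fact: duality}, this becomes $\phi(x) - \phi(\mu) - (x-\mu)\theta = \phi(x) - x\theta + (\mu\theta - \phi(\mu))$. Comparing with the simplified right-hand side, the $\phi(x) - x\theta$ terms match, so the entire claim reduces to the single scalar identity $\mu\theta - \phi(\mu) = \psi(\theta)$.

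The crux is therefore to establish this last identity, which is precisely the Fenchel--Young equality at the dual pair $(\mu,\theta)$. I would obtain it from \cref{prop: alt_def_of_dual}: since $\psi = \phi^{*}$ and $(\phi^{*})^{*} = \phi$, we have $\psi^{*} = \phi$, so applying the proposition to $\psi$ with $f = (\psi')^{-1} = \phi'$ yields $\phi(\mu) = \mu\phi'(\mu) - \psi(\phi'(\mu)) = \mu\theta - \psi(\theta)$, which rearranges to exactly $\mu\theta - \phi(\mu) = \psi(\theta)$.

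I do not anticipate a genuine obstacle here, as the computation is short and mechanical. The only point requiring care is the bookkeeping of the duality: one must treat $\mu$ and $\theta$ as a conjugate pair, so that simultaneously $\mu = \psi'(\theta)$ and $\theta = \phi'(\mu)$. Both the cancellation of the $\phi(x) - x\theta$ terms and the appeal to the Fenchel--Young relation hinge on substituting $\theta = \phi'(\mu)$ for the correct natural parameter associated with the mean $\mu$, rather than an unrelated parameter.
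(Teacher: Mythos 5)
Your proposal is correct and follows essentially the same route as the paper: both expand the ERF density, use the duality $\theta = \phi'(\mu)$ from \cref{fact: duality}, and reduce everything to the Fenchel--Young equality $\phi(\mu) = \mu\theta - \psi(\theta)$, which is obtained by applying \cref{prop: alt_def_of_dual} to $\psi$ exactly as the paper does. The only difference is presentational (you expand both sides and meet in the middle, while the paper proceeds linearly from $d_\phi(x,\mu)$), so there is nothing substantive to flag.
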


	A similar result had been obtained in different contexts - PCA \cite{collins2001generalization}, clustering \cite{banerjee2005clustering} and learning theory \cite{forster2002relative}.
	A proof follows from essentially similar ideas but we include a proof here for completeness.

	\begin{proof}
		Firstly, using \cref{prop: alt_def_of_dual} and \cref{fact: duality}, we have
		\begin{align*}
			\phi(\mu) = \psi^*(\mu) &= \mu((\psi')^{-1}(\mu)) - \psi((\psi')^{-1}(\mu))\\
			&= \mu\phi'(\mu) - \psi(\phi'(\mu))\\
			&= \mu\theta - \psi(\theta).
		\end{align*}
		We also have
		\begin{align*}
			-\log p_{(\psi, \theta)}(x) &= \psi(\theta) - x\theta - \log p_0(x).
		\end{align*}
		Therefore,
		\begin{align*}
			d_{\phi}(x, \mu) &= \phi(x) - \phi(\mu) - (x - \mu)\phi'(\mu)\\
			&= \phi(x) - \phi(\mu) - (x - \mu)\theta\\
			&= \phi(x) - (\mu\theta - \psi(\theta)) - (x - \mu)\theta\\
			&= \phi(x) + \psi(\theta) - x\theta\\
			&= -\log p_{(\psi, \theta)}(x) + \phi(x) + \log p_0(x).
			\qedhere
		\end{align*}
	\end{proof}

	\cref{lem: negative_log_likelihood} now follows immediately.

	\begin{proof}[Proof of \cref{lem: negative_log_likelihood}]
		Using \cref{lem: div_to_prob},
		\begin{align*}
			S_{\phi}(W) &= \sum_i \EE_{\pa(i)} I_{\phi}(X_i | X_{\pa(i)})\\
			&= \sum_i \EE_{\pa(i)} \EE_{X_i}[d_{\phi}(X_i, \EE[X_i | \pa(i)])|\pa(i)]\\
			&= \sum_i \EE_{\pa(i)} \EE_{X_i}[d_{\phi}(X_i, g_i(\pa(i)))|\pa(i)]\\
			&= \sum_i \EE[d_{\phi}(X_i, g_i(\pa(i)))]\\
			&= \sum_i \EE[-\log p_{g_i, \psi}(X_i | \pa(i)) + \phi(X_i) + \log p_0(X_i)]\\
			&= -\sum_{i} \E_{X}\log p_{g_{i},\psi}(X_{i}\given \pa(i)) - C(X)
		\end{align*}
		where $C(X)$ depends only on $X$ and not the underlying DAG $\gr$.
	\end{proof}

	\section{Proof of \cref{lem: main_thm}}\label{subsec: proof_of_main}

	We will prove the result for a more general class of functionals $g$ that subsume the Bregman-information.

	For a function $f: \RR \times \RR \longrightarrow \RR$ and a distribution $\calD$ over $\RR$, define $\mu(f, \calD)$ to be a minimizer (fix an arbitrary choice) of $\EE_{x \sim \calD}[f(x, y)]$ over $y \in \RR$. That is, for all reals $y \in \RR$,
	\[\EE_{x \sim \calD}f(x, y) \ge \EE_{x \sim \calD}[f(x, \mu(f, \calD))]\]

	Fix $f$ and a random variable $X$ with distribution $\calD$. Define $g(X) = \EE_{X \sim \calD}[f(X, \mu(f, \calD))]$. The choice of $\mu$, among all possible minimizers, doesn't matter because the value of the functional $g$ will be the same for any such choice. For practical applications, we would generally want $g$ to be efficiently approximable via finite samples.

	\begin{lemma}\label{prop: gisinformation}
		Suppose $f= d_{\phi}$ for a strictly convex, differentiable $\phi: \RR \to \RR$. Then, $g$ is the Bregman-information $I_{\phi}$.
	\end{lemma}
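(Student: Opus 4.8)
The plan is to unwind the definitions and appeal directly to \cref{propn: bregman_minimizer}. By construction, $\mu(f,\calD)$ is a minimizer of $y \mapsto \EE_{x\sim\calD}[f(x,y)]$, so that
\[
g(X) = \EE_{x\sim\calD}[f(x,\mu(f,\calD))] = \min_{y\in\RR}\EE_{x\sim\calD}[f(x,y)].
\]
Specializing to $f = d_{\phi}$ reduces the claim to evaluating the right-hand side for the Bregman divergence, which is exactly the optimization problem addressed by \cref{propn: bregman_minimizer}.

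The key step is to identify the minimizer. \cref{propn: bregman_minimizer} states that $\min_{y\in\RR}\EE_{x\sim\calD}[d_{\phi}(x,y)]$ has a unique minimizer at $y = \EE_{x\sim\calD}[x] =: \mu$, the mean of $\calD$. Hence $\mu(d_{\phi},\calD) = \mu$, regardless of which minimizer was fixed in the definition of $\mu(f,\calD)$, since uniqueness guarantees there is only one admissible choice. Substituting this into the display above yields
\[
g(X) = \EE_{x\sim\calD}[d_{\phi}(x,\mu)] = I_{\phi}(\calD) = I_{\phi}(X),
\]
where the middle equality is precisely the definition of the Bregman information. This establishes $g = I_{\phi}$.

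There is no substantive obstacle here: the entire content of the lemma is carried by \cref{propn: bregman_minimizer}, the characterization of the Bregman minimizer as the mean, and the remaining work is purely definitional bookkeeping. The only point worth flagging is that uniqueness in \cref{propn: bregman_minimizer} makes the ``arbitrary choice'' in the definition of $\mu(f,\calD)$ immaterial in this case, so that $g$ is genuinely well-defined and coincides with $I_{\phi}$.
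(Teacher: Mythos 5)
Your proof is correct and follows essentially the same route as the paper: invoke \cref{propn: bregman_minimizer} to identify $\mu(d_\phi,\calD)$ with the mean $\EE_{x\sim\calD}[x]$, substitute into the definition of $g$, and recognize the result as $I_\phi$. Your added remark that uniqueness of the minimizer makes the ``arbitrary choice'' in the definition of $\mu(f,\calD)$ immaterial is a nice touch the paper leaves implicit, but it does not change the argument.
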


	\begin{proof}
		Using \cref{propn: bregman_minimizer}, we obtain $\mu(f, \calD) = \EE_{x \sim \calD}[x]$. Therefore,
		\begin{align*}
			g(X) = \EE[f(X, \mu(f, \calD))] = \EE[d_{\phi}(X, \EE_{X \sim \calD}[X])] = I_{\phi}(X)
		\end{align*}
	\end{proof}

	Consider a distribution $X = (X_1, \ldots, X_d)$ with an underling DAG $W$. Suppose for all $i$, \[\EE[g(X_i | X_{\pa(i)})] = \EE_w[g(X_i|X_{\pa(i)} = w)] = \tau\]
	where $\tau$ is a constant. Note that this reduces to the the equal Bregman-information assumption \cref{assumption: eqinf} when $f = d_{\phi}$.

	We now prove the following generalization of a similar result by \cite{gao2020polynomial}.

	\begin{lemma}\label{lem: main_lemma1}
		Let $Y$ be a fixed set of variables. Then, for any $i$ such that $X_i \not\in Y$ and no element of $Y$ is a descendant of $X_i$,
		\begin{align*}
			\EE[g(X_i|Y)] &= \tau \text{ if $X_{\pa(i)} \subseteq Y$}\\
			\EE[g(X_i|Y)] &\ge \tau \text{ otherwise}
		\end{align*}
		Moreover, if for all ancestral sets $Y$ of $i$ such that $\pa(i) \not\subseteq Y$, we had $\EE[g(X_i | X_Y)] > \EE[g(X_i | X_{\pa(i)})]$, then the inequality above is strict.
	\end{lemma}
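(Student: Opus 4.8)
The plan is to isolate the single structural property of $g$ that drives everything: conditioning on additional variables can only \emph{decrease} $\EE[g(X_i \mid \cdot)]$. Write $h(y) := g(X_i \mid Y = y) = \min_{c \in \RR}\EE[f(X_i, c)\mid Y = y]$, which is legitimate since $g$ evaluated at a distribution is by definition the minimal value of $\EE[f(\cdot, c)]$ over $c$. I would first prove the monotonicity statement: if $Y' = (Y, Z)$ refines $Y$, then $\EE[g(X_i\mid Y')] \le \EE[g(X_i \mid Y)]$. For this, fix a value $y$ and let $c_y$ attain $h(y)$; then for every $z$ we have $\min_c \EE[f(X_i,c)\mid Y=y, Z=z] \le \EE[f(X_i, c_y)\mid Y=y, Z=z]$, and averaging over $z \mid y$ and using the tower property collapses the right-hand side to $\EE[f(X_i, c_y)\mid Y=y] = h(y)$. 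Averaging the resulting pointwise bound over $Y$ gives the claim. This is the crux of the argument and the place where the special ``Bregman-type'' structure of $g$ (a minimum of an expectation) is genuinely used.

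Next I would handle the equality case. When $\pa(i) \subseteq Y$ and $Y$ contains no descendant of $i$, the local Markov property of $W$ yields $X_i \perp (Y \setminus \pa(i)) \mid \pa(i)$, so the conditional law of $X_i$ given $Y = y$ coincides with its law given $\pa(i) = y_{\pa(i)}$. Hence $h(y) = g(X_i \mid \pa(i) = y_{\pa(i)})$ depends only on the parent coordinates, and taking expectations together with the generalized equal-information hypothesis $\EE_w[g(X_i \mid \pa(i) = w)] = \tau$ gives $\EE[g(X_i \mid Y)] = \tau$.

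The lower bound in the case $\pa(i) \not\subseteq Y$ then follows by combining the two pieces: set $Y' = Y \cup \pa(i)$, which still contains no descendant of $i$, so the equality case gives $\EE[g(X_i\mid Y')] = \tau$, while monotonicity applied to $Y \subseteq Y'$ gives $\EE[g(X_i\mid Y)] \ge \EE[g(X_i \mid Y')] = \tau$. For the final strict statement I would invoke \cref{assumption: causal_minimality}: the missing parents $\pa(i)\setminus Y$ are non-superfluous, so conditioning on them genuinely lowers the functional on a set of positive probability, turning the monotonicity inequality strict. The main obstacle I anticipate is precisely this last step --- the pointwise monotonicity bound is only weak ($\le$), so converting it into a strict inequality requires reducing the general non-descendant set $Y$ to the ancestral-set form in which the minimality assumption is phrased and arguing that the extra parents change the conditional minimizer with positive probability; the earlier steps are routine once the definition of $g$ and the Markov property are in hand.
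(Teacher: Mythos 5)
Your proposal is correct and follows essentially the same route as the paper: your monotonicity step (pointwise comparison of the coarse minimizer $c_y$ against the refined conditional minimizer, then the tower property) is exactly the paper's displayed chain of inequalities with $Z = \pa(i)\setminus Y$, the equality case is the same Markov-property argument, and the lower bound is obtained by the same reduction to the enlarged set $Y \cup \pa(i)$. The one obstacle you anticipate---making the inequality strict---is not actually there: in this paper ``ancestral set'' is used in the sense of \cref{assumption: causal_minimality}, which quantifies over \emph{all} subsets of the non-descendants $\mathcal{A}_W(i)$, so your $Y$ is already covered, and since $\EE[g(X_i\mid X_{\pa(i)})]=\tau$ the hypothesized strict inequality $\EE[g(X_i\mid X_Y)]>\EE[g(X_i\mid X_{\pa(i)})]$ is literally the desired conclusion, with no positive-probability or minimizer-perturbation argument required.
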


	\begin{proof}[Proof of \cref{lem: main_lemma1}]
		Let $\calD$ denote the marginal distribution of $X_i$ and let $\calD_{A}$ denote the marginal distribution of $X_i$ conditioned on fixing the variable $A$.

		If $X_{\pa(i)} \subseteq Y$, then
		\begin{align*}
			\EE[g(X_i | Y)] &= \EE[g(X_i|X_{\pa(i)}, Y \setminus X_{\pa(i)})]\\
			&= \EE[g(X_i|X_{\pa(i)})]\\
			&= \tau
		\end{align*}
		where we used the fact that conditioned on $X_{\pa(i)}$, $X_i$ is independent of $Y \setminus X_{\pa(i)}$.

		On the other hand, suppose $X_{\pa(i)} \not\subseteq Y$. Let $Z = X_{\pa(i)} \setminus Y$ be the set of free parent variables. For the sake of brevity, denote by $\mu_Y$ the quantity $\mu(f, \calD_Y)$ and denote by $\mu_{Y, Z}$ the quantity $\mu(f, \calD_{Y, Z})$. We have
		\begin{align*}
			\EE[g(X_i|Y)] &= \EE[\EE[f(X_i, \mu_Y)|Y]]\\
			&= \EE[\EE[ \EE[f(X_i, \mu_Y)| Y, Z] | Y]]\\
			&\ge \EE[\EE[ \EE[f(X_i, \mu_{Y, Z})| Y, Z] | Y]]\\
			&= \EE[\EE[ g(X_i |Y, Z) | Y]]\\
			&= \EE[g(X_i | Y, Z)]\\
			&= \tau
		\end{align*}
		where the inequality followed from the definition of $\mu$ and the last equality used the preceding case that we've already shown, since $X_{\pa(i)} \subseteq Y \cup Z$. Finally, if we had the condition $\EE[g(X_i | X_Y)] > \EE[g(X_i | X_{\pa(i)})]$ for all ancestral sets $Y$ not containing $pa(i)$, then the inequality in the display above also strictly holds because $Z \neq \emptyset$.
	\end{proof}

	We can now prove \cref{lem: main_thm}.

	\begin{proof}[Proof of \cref{lem: main_thm}]
		Let $f = d_{\phi}$. Then, we observe that $g = I_{\phi}$ by \cref{prop: gisinformation}. Therefore, \cref{lem: main_lemma1} can be applied in this setting, and by \cref{assumption: causal_minimality}, strict inequality holds.

		Consider the forward phase of GFBS. Let the vertices added to $T$ be $v_1, v_2, \ldots, v_d$ respectively in that order. We prove by strong induction on $i$ that for all $i \ge 1$, $v_i$ is a source node (a vertex of indegree $0$) of the graph $W \setminus\{v_1, \ldots, v_{i - 1}\}$.

		To prove the base case, observe that if a vertex $v$ has a parent in $W$, then $\EE[g(X_v)] > \EE[g(X_v | X_{\pa(v)})] = \tau$. On the other hand, if $v$ is any source node of the graph, then $\EE[g(X_v)] = \tau$. Hence, $v_1$ will be a source node of the graph, proving the base case.

		Assume the result holds for all indices upto $i$ and consider $v_{i+ 1}$. Let $H$ be the DAG $W \setminus \{v_1, \ldots, v_i\}$. Consider an arbitrary vertex $w$ in $H$. Firstly, because of the induction hypothesis, no $v_j$ is a descendant of $w$ for any $j \le i$. Now, if $w$ is a source node in $H$, that is, $\pa(w) \subseteq \{v_1, \ldots, v_i\}$, then $\EE[g(X_w | X_{v_1}, \ldots, V_{v_i})] = \EE[g(X_w | X_{\pa(w)}] = \tau$, where used the Markov property. On the other hand, if a vertex $w$ in $H$ has a parent in $H$, that is, $\pa(w) \subsetneq \{v_1, \ldots, v_i\}$, then $\EE[g(X_w | X_{v_1}, \ldots, V_{v_i})] > \EE[g(X_w | X_{\pa(w)}] = \tau$. These two assertions prove that $V_{i + 1}$ is a source node of $H$, proving the induction step.

		Therefore, for all $i \ge 1$, we must have $\pa(v_i) \subseteq \{v_1, \ldots, v_{i - 1}\}$. This proves that $T = v_1 \ldots, v_d$ is a topological sorting of the vertices of the graph. In the backward phase, all edges $e = (i, j)$ not in $W$ will be removed from $W$ because the score will not change after removing $e$ because $j$'s current parents will contain $\pa_W(j)$. Ultimately, the true DAG $W$ remains which will be returned by GFBS.
	\end{proof}

	We now explain how \cref{prop: gfbs_eqvar} follows from \cref{lem: main_thm}. By \cref{example: variance}, if we take $\phi(x) = x^2$, then the corresponding Bregman information is the variance. In \cite{chen2018causal}, they consider a linear SEM with equal error variances. The assumption of equal error variances is precisely the equal Bregman-information \cref{assumption: eqinf} we impose. Now, they iteratively find source nodes for the graph and then condition on them. But by the above inductive proof of \cref{lem: main_thm}, this is exactly what happens in the forward phase of GFBS. Therefore, GFBS recovers their algorithm when specialized to equal error-variance linear SEMs.

	Other examples of functionals $g$ from previous works include:
	\begin{itemize}
		\item NPVAR~\citep{gao2020polynomial}: $g(X_i\given A) = \var(X_i\given A)$
		\item QVF-ODS~\citep{park2017}: $g(X_i\given A) = \var(T_i(X_i)\given A) - \E(T_i(X_i)\given A)$ with $\tau=0$, $T_i$ is a linear transformation that depends on $\E(X_i \given A)$.
		\item GHD~\citep{park2019identifiability}: $g(X_i\given A) = ((X_i)_r) - \E f_i^{(r)}(\E(X_i\given A))$ with $\tau=0$, where $(a)_r = a(a-1)\cdots(a-r+1)$ and $f_i^{(r)}$ is a $r$th factorial constant moments ratio (CMR) function of form
		\[
		f_i^{(r)}(x;a(i),b(i))=x^r\prod_{k=1}^{p_i}\frac{(a_{ik}+r-1)_r}{a_{ik}^r}\prod_{\ell=1}^{q_i}\frac{b^r_{i\ell}}{(b_{i\ell}+r-1)_r}
		\]
		such that
		\[
		\E[(X_i)_r\given \pa(i)] = f_i^{(r)}\big(\E[X_i\given \pa(i)];a(i),b(i)\big)
		\]
		for any integer $r \le  \max X_i$.
	\end{itemize}

	\section{A natural score function for non-parametric multiplicative models}\label{subsec: score_for_mult_models}

	Consider the multiplicative SEM model
	\[X_i = f(X_{\pa(i)}) \eps_i\]
	with an underlying DAG $W$. We will also assume that $\eps_i$ is positive with probability $1$. Examples of such models include growth models from economics and biology \cite{marshall2007life}.

	Let $\phi(x) = -\log x$. Then, the Bregman divergence $d_{\phi}$ will be the Itakuro-Saito distance used in Signal and Speech processing community. From \cref{cor: main_identifiability}, we get that the model is identifiable under the condition
	\[\EE[I_{\phi}(X_i|X_{\pa(i)})] = \text{constant}.\]
	But we can compute this explicitly for a multiplicative model. Firstly, note that $\EE[X_i | X_{\pa(i)} = w] = \EE[f(X_{\pa(i)})\eps_i | X_{\pa(i) = w}] = f(X_{\pa(i)})\EE[\eps_i]$. Using the same calculations as in \cref{example: itakuro-saito}, we get
	\begin{align*}
		\EE[I_{\phi}(X_i|X_{\pa(i)})] &= \EE_w[\EE[-\log\frac{X_i}{\EE[X_i | X_{\pa(i)} = w]} | X_{\pa(i)} = w]]\\
		&= \EE_w [\EE[-\log\frac{\eps_i}{\EE[\eps_i]} | X_{\pa(i)} = w]]\\
		&= \log \EE[\eps_i] - \EE[\log \eps_i]
	\end{align*}

	Therefore, the equal Bregman-information assumption is equivalent to the following assumption on the noise variables
	\[\log \EE[\eps_i] - \EE[\log \eps_i] = \text{constant}\]

	This is satisfied for instance when $\eps_i$ are identically distributed. Our theory of Bregman scores illustrates that when such assumptions are feasible, such as in the case of identically distributed noise variables, then to estimate such models via score based approaches, a great candidate score would be the Itakuro-Saito score
	\[S_{\phi}(W) = \sum_{i \le d} \EE[I_{\phi}(X_i | X_{\pa(i)})] = \sum_{i \le d} (\EE\log \EE[X_i | X_{\pa(i)}] - \EE[\log X_i]).\]

	\section{Proofs for Section~\ref{sec: sample_complexity}}\label{app:samcom}
	\subsection{Proof of Lemma~\ref{lem:S_estimator_rate}}
	\begin{proof}
		For all $i\in [d]$ and $A\subseteq \mathcal{A}_G(i)$,
		\begin{align*}
			\E\bigg(\widehat{S}(X_i\given A) - S(X_i\given A)\bigg)^2 &\lesssim \E\bigg(\E\phi(X_i) - \frac{1}{n}\sum_t\phi(X_i^{(t)}) \bigg)^2 + \E\Bigg( \E\phi(f_{iA}) -  \frac{1}{n}\sum_t \phi \bigg(\widehat{f}_{iA}(A^{(t)})\bigg)\Bigg)^2 \\
			& \lesssim n^{-1} + \E\Bigg( \E\phi(f_{iA}) -  \frac{1}{n}\sum_t \phi \bigg(\widehat{f}_{iA}(A^{(t)})\bigg)\Bigg)^2
		\end{align*}
		due to the finite second moment and parametric rate. For the second term,
		\begin{align*}
			&\E\Bigg( \E\phi(f_{iA}) -  \frac{1}{n}\sum_t \phi \bigg(\widehat{f}_{iA}(A^{(t)})\bigg)\Bigg)^2 \\
			& = \E \Bigg(\E\phi(f_{iA}) - \frac{1}{n}\sum_t \phi \bigg(f_{iA}(A^{(t)})\bigg) + \frac{1}{n}\sum_t \phi \bigg(f_{iA}(A^{(t)})\bigg) - \frac{1}{n}\sum_t \phi \bigg(\widehat{f}_{iA}(A^{(t)})\bigg)\Bigg)^2 \\
			&\lesssim  \E \Bigg(\E\phi(f_{iA}) - \frac{1}{n}\sum_t \phi \bigg(f_{iA}(A^{(t)})\bigg)\Bigg)^2 + \frac{1}{n}\sum_t\E\Bigg( \phi \bigg(f_{iA}(A^{(t)})\bigg) - \phi \bigg(\widehat{f}_{iA}(A^{(t)})\bigg)\Bigg)^2\\
			& \lesssim n^{-1} + \frac{1}{n}\sum_t \E \bigg(\phi'(f_{tA}(A^{(t)}))\bigg)^2\E\bigg(f_{iA}(A^{(t)}) - \widehat{f}_{iA}(A^{(t)})\bigg)^2 \\
			& \lesssim  n^{-1} + n^{\frac{-2s}{2s+d}}
		\end{align*}
		For the first term, the inequality is by finite second moment and parametric rate. For second term, apply first order Taylor expansion and absorb the high order estimation error terms into the constant before the inequality. Finally, the tail probability bound follows by Markov's inequality.
	\end{proof}

	\subsection{Proof of Theorem~\ref{thm:samcom:gfbs:forward}}
	\begin{proof}
		Let $\widehat{A}_0=\emptyset$ and for $j \ge 1$, $\widehat{A}_j = \{\widehat{\pi}_i | i = 1, 2, \ldots j\}$. Denote the event $\mathcal{E}_j = \{\widehat{\pi}_{j} \text{ is a source node of } G[V\setminus \widehat{A}_{j-1}]\}$. Then
		\begin{align*}
			\prob(\widehat{\pi}\text{ is a valid ordering}) =  \prod_{j=0}^{d-1}\prob(\mathcal{E}_{j+1} \given \mathcal{E}_j)
		\end{align*}
		For each term of the product,
		\[
		\prob(\mathcal{E}_{j+1} \given \mathcal{E}_j) = \sum_{\substack{A\text{ is a subset of non-descendants}\\|A|=j}}\prob(\mathcal{E}_{j+1} \given \widehat{A}_j=A, \mathcal{E}_j)\prob(\widehat{A}_j=A\given \mathcal{E}_j)
		\]
		$\mathcal{E}_j$ implies that $\widehat{A}_j=A$ is of size $j$ and a subset of non-descendants of remaining nodes. More importantly, all possibilities sum up to one
		\[
		\sum_{\substack{A\text{ is a subset of non-descendants}\\|A|=j}}\prob(\widehat{A}_j=A\given \mathcal{E}_j)=1
		\]
		Invoking Lemma~\ref{lem:S_estimator_rate}, union bound the estimation error
		\[
		\prob\bigg(\cup_{i\notin V\setminus A}\bigg\{|\widehat{S}(X_i\given A) - S(X_i\given A)| \ge t \bigg\}\bigg) \le \sum_{i\notin V\setminus A}\prob\bigg(|\widehat{S}(X_i\given A) - S(X_i\given A)| \ge t\bigg) \le   (d-j)\frac{\delta^2_n}{t^2}
		\]
		Thus, with probability at least $1 - (d-j)\delta^2_n/t^2$, we have
		\[
		\begin{cases}
			\widehat{S}(X_i\given A) \le \tau + t & i\text{ is a source node of }G[V\setminus A] \\
			\widehat{S}(X_i\given A) \ge \tau + \Delta - t & i\text{ is not a source node of }G[V\setminus A]
		\end{cases}
		\]
		Therefore, with $t\le \Delta/2$, the node $\widehat{\pi}_{j+1}$ found by GFBS which minimizes the score is still a source node. This implies for all possible $A$,
		\[
		\prob(\mathcal{E}_{j+1} \given \widehat{A}_j=A, \mathcal{E}_j)\ge 1 - 4(d-j)\frac{\delta^2_n}{\Delta^2}
		\]
		furthermore,
		\[
		\prob(\mathcal{E}_{j+1} \given \mathcal{E}_j)\ge 1 - 4(d-j)\frac{\delta^2_n}{\Delta^2}
		\]
		and finally
		\[
		\prob(\widehat{\pi}\text{ is a valid ordering}) = \prod_j\prob(\mathcal{E}_{j+1} \given \mathcal{E}_j)\ge 1 - \sum_{j=0}^{d-1}(d-j)\frac{4\delta^2_n}{\Delta^2} \ge 1 - \frac{4d^2\delta^2_n}{\Delta^2}
		\]
		Solving $\prob(\widehat{\pi}\text{ is a valid ordering}) > 1-\epsilon$ yields the desired result.
	\end{proof}

	\subsection{Proof of Theorem~\ref{thm:samcom:gfbs:backward}}

	\begin{proof}
		We only need to show parents of each node are correctly estimated. Theorem~\ref{thm:samcom:gfbs:forward} guarantees that the parents of $\widehat{\pi}_{j+1}$ are in $\widehat{A}_j$. Thus,
		\[
		S(\widehat{\pi}_{j+1} \given \widehat{A}_j) =S(\widehat{\pi}_{j+1} \given \pa(\widehat{\pi}_{j+1}))
		\]
		By the definition of $\Delta$,
		\begin{align*}
			\begin{cases}
				S(X_{\widehat{\pi}_{j+1}}\given \widehat{A}_j\setminus i) - S(X_{\widehat{\pi}_{j+1}}\given \widehat{A}_j) \ge \Delta & i\in\pa(\widehat{\pi}_{j+1}) \\
				S(X_{\widehat{\pi}_{j+1}}\given \widehat{A}_j\setminus i) - S(X_{\widehat{\pi}_{j+1}}\given \widehat{A}_j) = 0 & i\not\in\pa(\widehat{\pi}_{j+1})
			\end{cases}
		\end{align*}
		Invoking Lemma~\ref{lem:S_estimator_rate}, with probability at least $1 - d\delta^2_n/t^2$, for all $i\in \widehat{A}_j \cup \{\emptyset\}$
		\[
		|\widehat{S}(X_{\widehat{\pi}_{j+1}}\given \widehat{A}_j \setminus i)) - S(X_{\widehat{\pi}_{j+1}} \given \widehat{A}_j \setminus i))| \le t
		\]
		which implies
		\[
		\begin{cases}
			|\widehat{S}(X_{\widehat{\pi}_{j+1}}\given \widehat{A}_j) - \widehat{S}(X_{\widehat{\pi}_{j+1}}\given \widehat{A}_j \setminus i)| \le 2t & i\in\pa(\widehat{\pi}_{j+1}) \\
			|\widehat{S}(X_{\widehat{\pi}_{j+1}}\given \widehat{A}_j) - \widehat{S}(X_{\widehat{\pi}_{j+1}}\given \widehat{A}_j \setminus i)| \ge \Delta - 2t & i\notin\pa(\widehat{\pi}_{j+1})
		\end{cases}
		\]
		With $2t \le \Delta/2=\gamma$, we can distinguish parents of $\widehat{\pi}_{j+1}$ from other non-descendants, thus $\widehat{\pa}(\widehat{\pi}_{j+1}) = \pa(\widehat{\pi}_{j+1})$. A union bound over $d$ nodes gives us the same sample complexity as in Theorem~\ref{thm:samcom:gfbs:forward}, which completes the proof.
	\end{proof}

	\section{Unequal Bregman score cases}\label{app:unequal}
	In this section, we investigate the behaviour of GFBS when the equal Bregman information condition is violated. As is evident from the proofs in Appendix~\ref{subsec: proof_of_main}, exact equality is actually not necessary for the proof and the algorithm to go through.
	The Assumption~\ref{assumption: eqinf} has a straightforward extension analogous in the literature \citep{ghoshal2017ident}, which also ensures identifiability. We present the result here, whose proof follows Appendix~\ref{subsec: proof_of_main} and previous work and thus is omitted.
	\begin{assumption}\label{assumption: uneqinf}
		There exists a valid ordering $\pi$ such that for all $i\in {[d]}$ and $\ell \in \pi_{[i+1:d]}$,
		\begin{align*}
			\E[I_\phi(X_i\given X_{\pi_{[1:i-1]}})] = \E[I_\phi(X_i\given \pa(i))] < \E[I_\phi(X_\ell \given X_{\pi_{[1:i-1]}})]
		\end{align*}
	\end{assumption}

	To demonstrate this assumption, we conduct experiments with $\phi(x)=x^2$, which leads to $\E[I_\phi(X_i\given \pa(i))] = \E\var(X_i\given \pa(i))$.
	We can generate data satisfying this ``unequal'' assumption for Markov chain + sine model + Gaussian noise. The idea is that to restrict the range of noise variance.

	Suppose the Markov chain is $X_1\to\cdots\to X_d$, with $X_i = \sin(X_{i-1}) + Z_i$, and $Z_i\sim \mathcal{N}(0,\sigma_i^2)$. We restrict the $\sigma_i^2$ to be sampled from $[1,1.2]$. To make sure Assumption~\ref{assumption: uneqinf} is satisfied, we need for any $i$ and $i < \ell \le d$,
	\[
	\sigma_i^2 = \E\var(X_i\given X_{i-1}) < \E\var(X_\ell\given X_{i-1}) = \E\var(\sin(X_{\ell-1} + \epsilon_{\ell-1})\given X_{i-1}) +\sigma^2_\ell
	\]
	It suffices to find a lower bound on $\var(\sin(a + X))$ for any $a\in[-1,1]$ and $X\sim\mathcal{N}(0,\sigma^2)$.
	\begin{lemma}\label{lem:unequal:sine_gaussian}
		Suppose $X\sim\mathcal{N}(0,\sigma^2)$ with $\sigma^2\ge 1$, then for any $a\in [-1,1]$, $\var(\sin(X+a)) \ge 1/4$.
	\end{lemma}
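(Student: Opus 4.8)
The plan is to compute $\var(\sin(X+a))$ in closed form using the Gaussian characteristic function, and then reduce the desired inequality to an elementary two-variable bound that can be checked at an explicit worst-case point. First I would use the identity $\E[e^{itX}] = e^{-\sigma^2 t^2/2}$ for $X\sim\mathcal{N}(0,\sigma^2)$ together with the angle-addition formulas to obtain $\E[\sin(X+a)] = e^{-\sigma^2/2}\sin a$ and $\E[\cos(2(X+a))] = e^{-2\sigma^2}\cos 2a$ (the odd terms vanish by symmetry of the centered Gaussian). Combining these with $\sin^2\theta = \tfrac12(1-\cos 2\theta)$ yields the exact formula
\begin{equation*}
\var(\sin(X+a)) = \frac{1}{2} - \frac{1}{2}e^{-2\sigma^2}\cos(2a) - e^{-\sigma^2}\sin^2(a).
\end{equation*}

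Next I would reparametrize by setting $u := e^{-\sigma^2}$ and $s := \sin^2 a$. Since $\sigma^2\ge 1$ we have $u\in(0,e^{-1}]$, and since $a\in[-1,1]$ we have $s\in[0,\sin^2 1]$. Using $\cos 2a = 1 - 2s$, the target inequality $\var(\sin(X+a))\ge\tfrac14$ becomes the purely algebraic claim
\begin{equation*}
F(u,s) := \tfrac{1}{2}u^2 + s\,u(1-u) \le \frac{1}{4} \qquad \text{for all } u\in(0,e^{-1}],\ s\in[0,\sin^2 1].
\end{equation*}

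To establish this I would argue that $F$ is nondecreasing in each variable over the relevant box, so that its maximum is attained at the corner $(u,s) = (e^{-1},\sin^2 1)$. The partial derivative in $s$ is $u(1-u)\ge 0$, so it suffices to fix $s=\sin^2 1$; with $s$ so fixed, the $u$-derivative $u(1-2s)+s$ is monotone in $u$ and remains positive throughout $[0,e^{-1}]$ (verified by evaluating it at the endpoint $u=e^{-1}$), giving monotonicity in $u$ as well. It then remains to check the single numerical inequality $\tfrac12 e^{-2} + \sin^2(1)\,e^{-1}(1-e^{-1}) \le \tfrac14$, using sufficiently precise bounds such as $e^{-1}\le 0.3679$, $e^{-2}\le 0.1354$, and $\sin^2 1 \le 0.7081$, which yield a left-hand side of roughly $0.2323$.

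The main obstacle is that this final inequality is genuinely \emph{tight}: the left-hand side sits near $0.232$ against the target $0.25$, leaving a margin of only about $0.018$. Consequently the monotonicity reduction must be carried out carefully and the constants (especially $\sin^2 1$) bounded with enough precision, since a loose estimate of $\sin^2 1$ or of $e^{-1}$ would destroy the inequality. Everything else — the Gaussian moment computations and the calculus establishing monotonicity of $F$ — is routine.
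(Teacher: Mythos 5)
Your proof is correct and takes essentially the same approach as the paper: both compute the exact closed form $\var(\sin(X+a)) = \tfrac{1}{2} - \tfrac{1}{2}e^{-2\sigma^2}\cos(2a) - e^{-\sigma^2}\sin^2 a$ via Gaussian moments (angle-addition plus the characteristic function) and then bound it over $\sigma^2 \ge 1$, $a \in [-1,1]$. The only difference is that the paper asserts the final inequality $> 1/4$ without justification, whereas your reparametrization $u = e^{-\sigma^2}$, $s = \sin^2 a$ with the monotonicity and corner-point evaluation at $(e^{-1}, \sin^2 1)$ supplies the numerical verification that the paper omits.
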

	With Lemma~\ref{lem:unequal:sine_gaussian}, we can show that the identifiability is guaranteed:
	\[
	\E\var(\sin(X_{\ell-1} + \epsilon_{\ell-1})\given X_{i-1}) \ge \E (1/4 \given X_{i-1}) =1/4 > 0.2 = \max(\sigma_i^2 - \sigma^2_\ell)
	\]
	\begin{proof}[Proof of Lemma~\ref{lem:unequal:sine_gaussian}]
		Using the identity
		\[
		\sin(X+a) = \sin a \cos X + \cos a \sin X
		\]
		we have
		\[
		\E \sin (X+a) = \sin a \E\cos X = \sin a \times \text{Re} [\E \exp(-i X)] = \sin a \exp(-\sigma^2/2).
		\]
		Moreover, a short calculation shows that
		\begin{align*}
			\E \sin^2(X+a)
			& = \frac{1}{2} - \frac{1}{2}\cos 2a \exp(-2\sigma^2).
		\end{align*}
		Finally,
		\begin{align*}
			\var\sin(X+a) &= \E \sin^2(X+a) - (\E \sin(X+a))^2 \\
			&= \frac{1}{2} - \frac{1}{2}e^{-2\sigma^2} + \sin^2a \times e^{-\sigma^2}(e^{-\sigma^2} - 1)\\
			&> 1/4.
			\qedhere
		\end{align*}
	\end{proof}
	To illustrate this condition, we run a simple experiment as follows:
	Consider two settings for $\sigma_i^2$: Sampled uniformly and randomly from (a) $[1,1.2]$ or (b) $[0.1,1.9]$. Run GFBS and Gobnilp on generated data, then compare the score obtained by two algorithms, and test whether the ordering of estimated graph is correct. Since the true graph is a Markov chain, there is only one true ordering.

	As shown in Figure~\ref{fig:unequal_expt}, when Assumption~\ref{assumption: uneqinf} is satisfied through Lemma~\ref{lem:unequal:sine_gaussian}, the score output by GFBS is close to the true one, and the topological ordering can be recovered. When the range of $\sigma^2_i$ is not well-controlled, GFBS does not return the correct ordering, and neither does Gobnilp. Interestingly, GFBS nonetheless does a good job at optimizing the score.
	\begin{figure}[h]
		\centering
		\begin{subfigure}[t]{0.75\textwidth}
			\includegraphics[width=1.\textwidth]{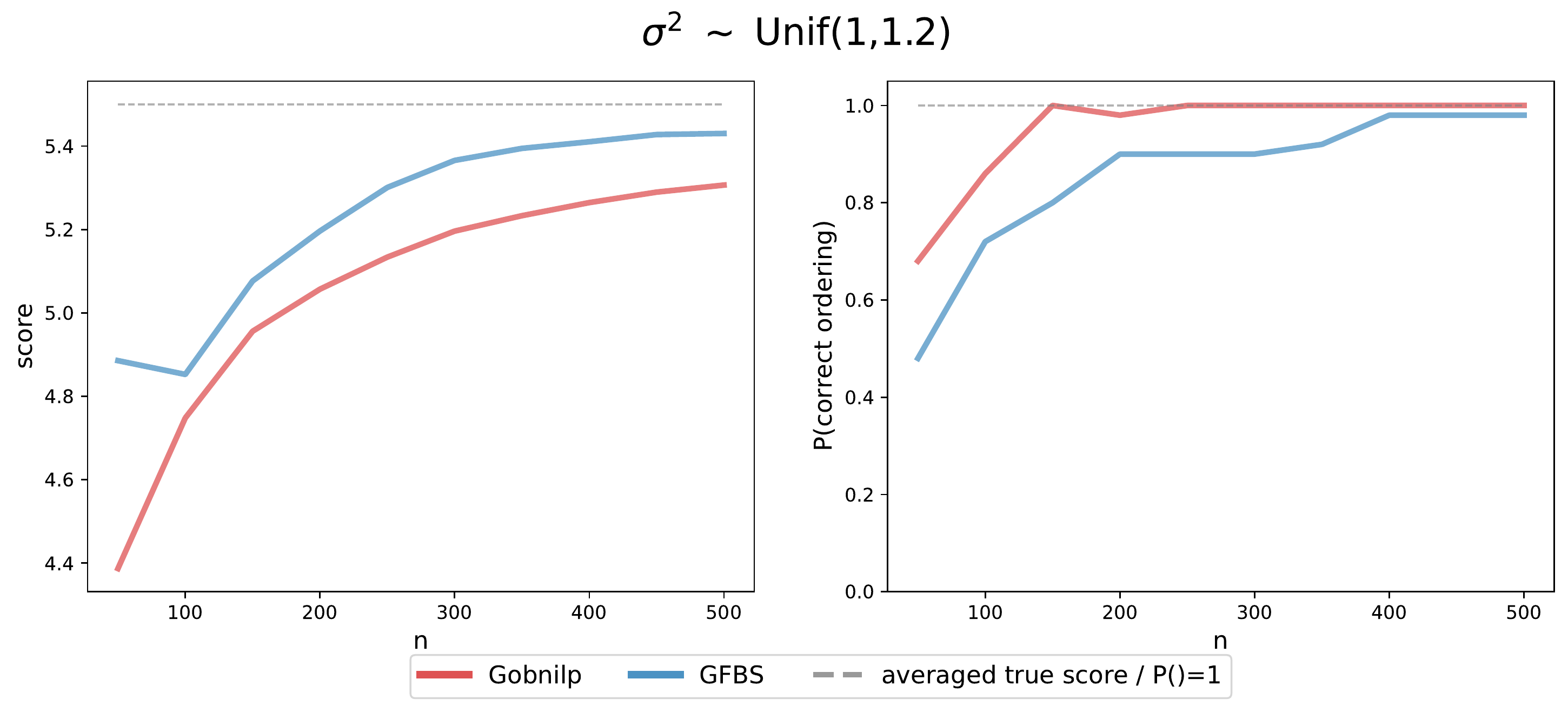}
		\end{subfigure}\\
		\begin{subfigure}[t]{0.75\textwidth}
			\includegraphics[width=1.\textwidth]{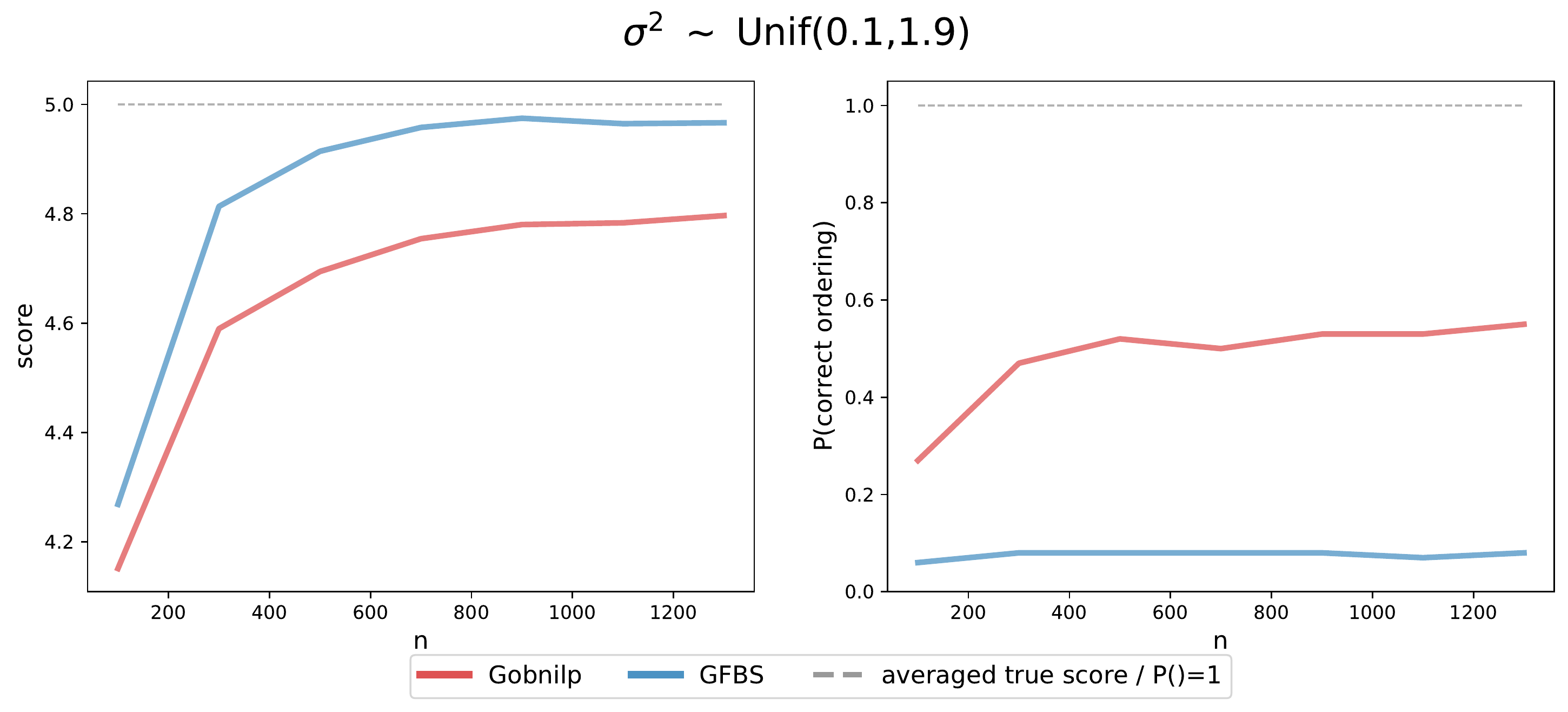}
		\end{subfigure}
		\caption{Unequal Bregman score experiments. Left column: score output by GFBS and Gobnilp; Right column: $\prob(\text{correct ordering})$ v.s. sample size; Upper row: range of $\sigma^2_i$ is $[1,1.2]$; Bottom row: range of $\sigma^2_i$ is from $[0.1,1.9]$. The gray dashed lines indicate average true score ($\sum_i\sigma^2_i$, left) or the optimal probability of recovery (right).}
		\label{fig:unequal_expt}
	\end{figure}

	\section{Experiment details}\label{app:expt}

	In this appendix we collect all the details of the experiments in Section~\ref{sec: expt}.

	\subsection{Experiment settings}
	\textbf{Bregman scores}: We define them through convex functions
	\begin{itemize}
		\item \textit{Residual variances}: $\phi_1(x)=x^2$
		\item \textit{Itakuro-Saito}: $\phi_2(x) = -\log (x)$
	\end{itemize}
	\textbf{Graph types}: We let the expected number of edges to scale with $d$, e.g. ER-2 stands for Erd\"os-R\'enyi with $2d$ edges.
	\begin{itemize}
		\item \textit{ER}: randomly choose $s$ edges from all possible $\binom{d}{2}$ directed edges, then randomly permute the nodes
		\item \textit{SF}: scale-free graphs generated through Barabasi-Albert process
		\item \textit{MC}: Markov chain, randomly permute the nodes
	\end{itemize}
	\textbf{Model types}: We specify the parental functions $f_i$ to be as follows: linear model (LIN), sine model (SIN), additive Gaussian Process (AGP) and non-additive Gaussian Process (NGP) with  with kernel $K(x,y) = e^{-\|x-y\|^2 / 2}$.
	\begin{itemize}
		\item For $\phi_1$, data is generated according to the form of $X_i=f_i(\pa(i))+Z_i$
		\begin{itemize}
			\item $\sigma$: additive noise standard deviation, set to 1
			\item Noise distribution:
			\begin{itemize}
				\item \textit{Gaussian}: $Z_i = \sigma \times \mathcal{N}(0,1)$
				\item \textit{t}: $Z_i = \sigma\times t(3) / \sqrt{3}$
				\item \textit{Gumbel}: $Z_i = \sigma \times Gumbel(0,\sqrt{6}/\pi)$
			\end{itemize}
			\item Parental functions:
			\begin{itemize}
				\item \textit{LIN}: $f_i = \sum_{\ell \in \pa(i)} \beta_{i\ell}X_\ell $, where $\beta_{i \ell} = Rademacher \times Unif(0.5,1.2) $
				\item \textit{SIN}: $f_i = \sum_{\ell \in \pa(i)} \sin(X_\ell) $
				\item \textit{AGP}: $f_i = \sum_{\ell \in \pa(i)} GP(X_\ell) $
				\item \textit{NGP}: $f_i =  GP(\pa(i)) $
			\end{itemize}
		\end{itemize}
		\item For $\phi_2$, data is generated according to the form of $X_i = f_i(\pa(i))\times Z_i$.
		\begin{itemize}
			\item Noise distribution:
			\begin{itemize}
				\item \textit{Uniform}: $Z_i\sim Unif(1,2)$
			\end{itemize}
			\item Parental function:
			\begin{itemize}
				\item \textit{SIN}: $f_i = \frac{1}{|\pa(i)|}\sum_{\ell\in\pa(i)}\sin^2 (X_\ell)$;
				\item \textit{AGP}: $f_i=\frac{1}{|\pa(i)|}\sum_{\ell\in\pa(i)}GP^2(X_\ell) + 0.5$;
				\item \textit{NGP}: $f_i=\frac{1}{2} GP^2(\pa(i)) + 0.5$
			\end{itemize}
		\end{itemize}
	\end{itemize}

	\textbf{Other parameters:}
	\begin{itemize}
		\item Dimension: $d=5,10,20,30$
		\item Number of edges: $s = kd$, $k=1,2,4$
		\item Sample size: $n=$[50,80,110,140,200,260,320] for $\phi_1$, $n=$[100,400,700,...,2200] for $\phi_2$
		\item Replications of simulation: $N=30$
	\end{itemize}

	\subsection{Implementation of algorithms}
	For GFBS, use Generalized Additive Model (GAM) to estimate all conditional expectations and compute all local scores as reference for all other methods. In particular, GAM is replaced by ordinary least square for \textit{LIN} model. In backward phase, use threshold $\gamma=0.05$ for $\phi_1$ and $\gamma=0.0005$ for $\phi_2$. GAM is implemented by Python package \texttt{pygam} with default parameters to avoid favoring one particular method due to hyper-parameter tuning,

	We compare GFBS with following score-based structure learning algorithms:
	\begin{itemize}
		\item Gobnilp\citep{cussens2012}: is an exact solver for score-based Baysian network learning through Constraint Integer Programming. We input the local scores output by GFBS for it to optimize. The implementation is available at \url{https://www.cs.york.ac.uk/aig/sw/gobnilp/}.
		\item NOTEARS\citep{zheng2018dags,zheng2020learning}: uses an algebraic characterization of DAGs for score-based structure learning of    nonparametric models via partial derivatives. We adopt example hyper-parameters to run, then compute the total score of output DAG using local scores output by GFBS. The implementation is available at \url{https://github.com/xunzheng/notears}.
		\item GDS\citep{peters2013}: greedily searches over neighbouring DAGs differed by adding / deleting / reversing one edge. Switch the score from log likelihood to our score setting, use \texttt{gam} function in R package \texttt{mgcv} with P-splines \texttt{bs=`ps'} and the default smoothing parameter \texttt{sp=0.6} to estimate the conditional expectations. In particular, GAM is replaced by ordinary least square for \textit{LIN} model. Implementation is available at \url{https://academic.oup.com/biomet/article/101/1/219/2364921#supplementary-data}. Omitted for $d> 10$ due to computational cost.
		\item GES\citep{chickering2003}: greedily searches over neighbouring Markov equivalence class to optimize the score. We use \texttt{sem-bic} score with \texttt{penaltyDiscount=0}, which amounts to score equals $BIC = 2L$ where $L$ is the likelihood. Only run for linear model and compare SHD. Implementation is available at \url{https://github.com/bd2kccd/py-causal}.
	\end{itemize}

	These simulations used an Intel E5-2680v4 2.4GHz CPU running on an internal cluster.

	\subsection{Evaluation metrics}
	\begin{itemize}
		\item Structural Hamming Distance (SHD): common metric for comparing performance in structure learning, which counts the total number of edge additions, deletions, and reversals needed to convert the estimated graph into the true graph.
		\item Bregman Score: $\sum_i\E\phi(X_i) - \E\phi(\E(X_i\given\pa(i)))$. Except for the true score indicated by grey dashed lines, this metric is evaluated in finite sample using estimator defined in \eqref{eq:S_estimator}.
	\end{itemize}

	\subsection{Additional experiments}
	Here we present some additional experiments.

	\subsubsection{Main figures with other noise}
	In Figure~\ref{fig:additional}, we present the left four columns of Figure~\ref{fig:main} under other two noise distribution: Gaussian and Gumbel. Note that the experiment settings are under $\phi_1$.

	\begin{figure}[h]
		\centering
		\includegraphics[width=\linewidth]{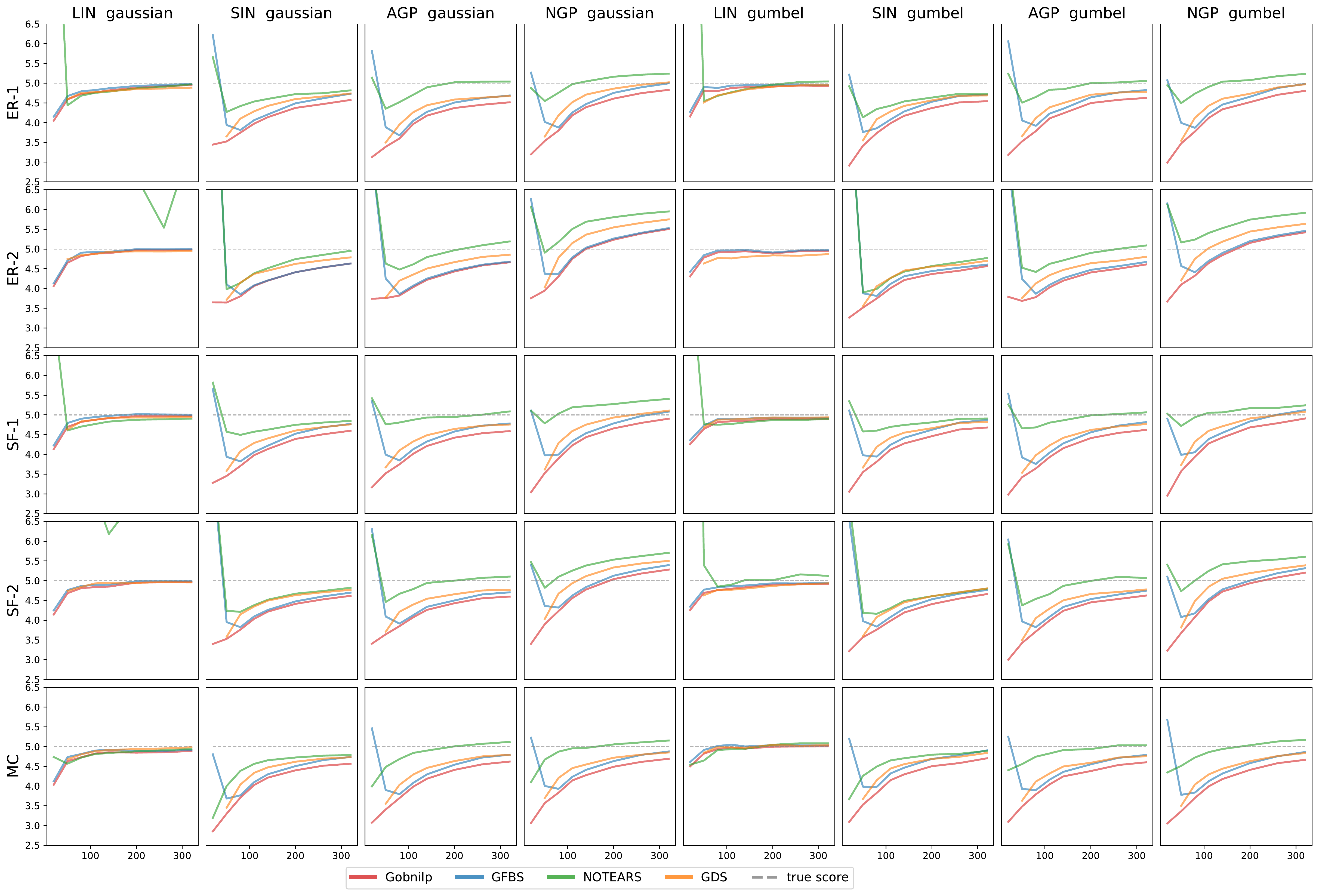}
		\caption{Score of output DAG vs. sample size $n$ for GFBS and 3 other algorithms for $\phi_1$ settings. Left four columns: $Z_i$ is Gaussian distribution with variance $1$; Right four columns: $Z_i$ is Gumbel distribution with variance $1$. The grey dashed line is the score of the true graph.}
		\label{fig:additional}
	\end{figure}

	\subsubsection{Structure learning}
	We illustrate the performance of GFBS on structural learning by considering experiment settings under higher dimensions and $\phi_1$, where Gobnilp and GDS are omitted due to heavy computational cost.
	\begin{itemize}
		\item Figure~\ref{fig:shd:gaussian}: SHD v.s. sample size $n$ for $d=20,30$, Gaussian noise, and $\phi_1$.
		\item Figure~\ref{fig:shd:t}: SHD v.s. sample size $n$ for $d=20,30$, t noise, and $\phi_1$.
		\item Figure~\ref{fig:shd:gumbel}: SHD v.s. sample size $n$ for $d=20,30$, Gumbel noise, and $\phi_1$.
	\end{itemize}

	\begin{figure}[h]
		\centering
		\begin{subfigure}[t]{0.49\textwidth}
			\includegraphics[width=1.\textwidth]{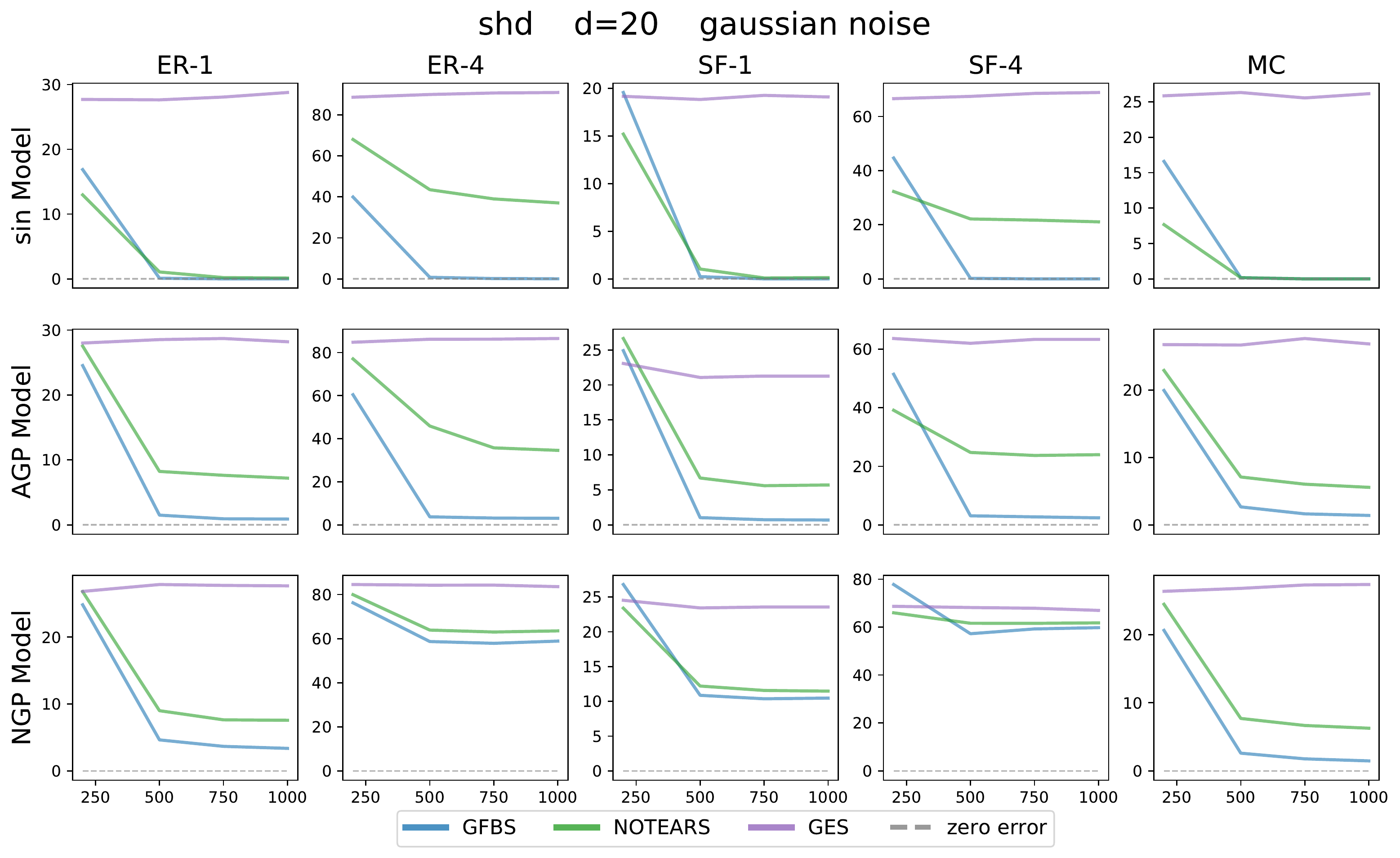}
		\end{subfigure}
		\begin{subfigure}[t]{0.49\textwidth}
			\includegraphics[width=1.\textwidth]{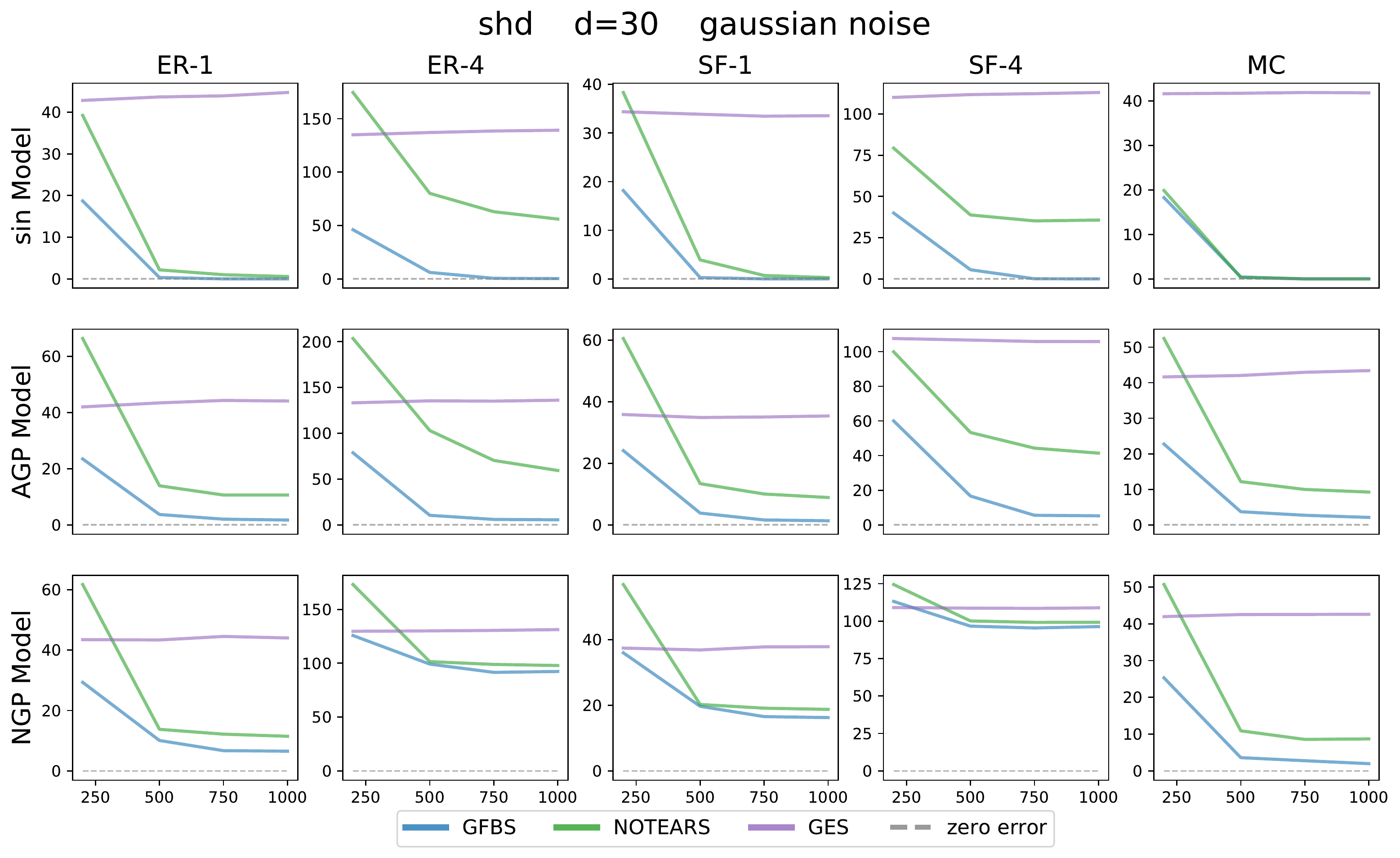}
		\end{subfigure}
		\caption{SHD v.s. sample size for $d=20,30$ and Gaussian noise.}
		\label{fig:shd:gaussian}
	\end{figure}

	\begin{figure}[h]
		\centering
		\begin{subfigure}[t]{0.49\textwidth}
			\includegraphics[width=1.\textwidth]{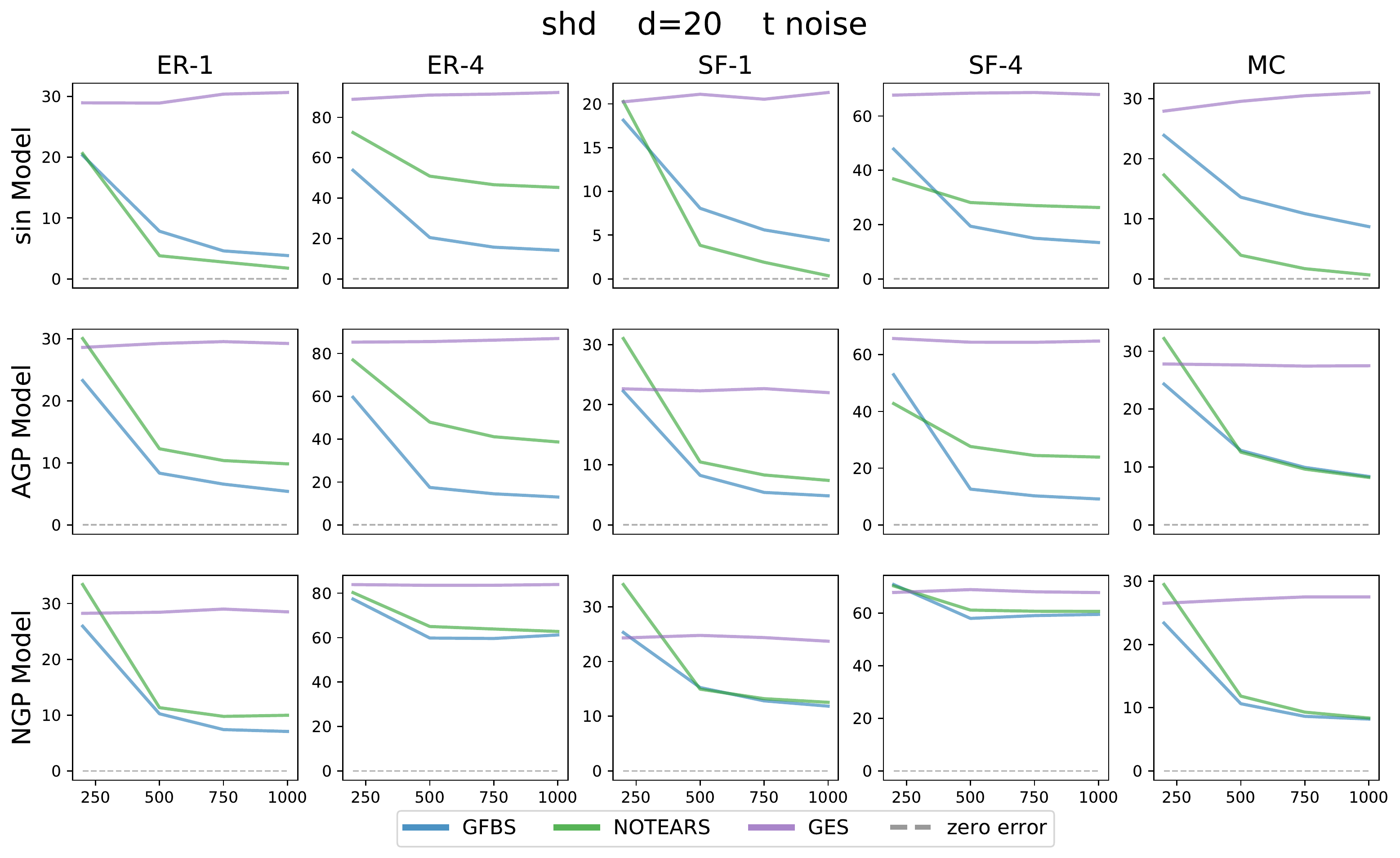}
		\end{subfigure}
		\begin{subfigure}[t]{0.49\textwidth}
			\includegraphics[width=1.\textwidth]{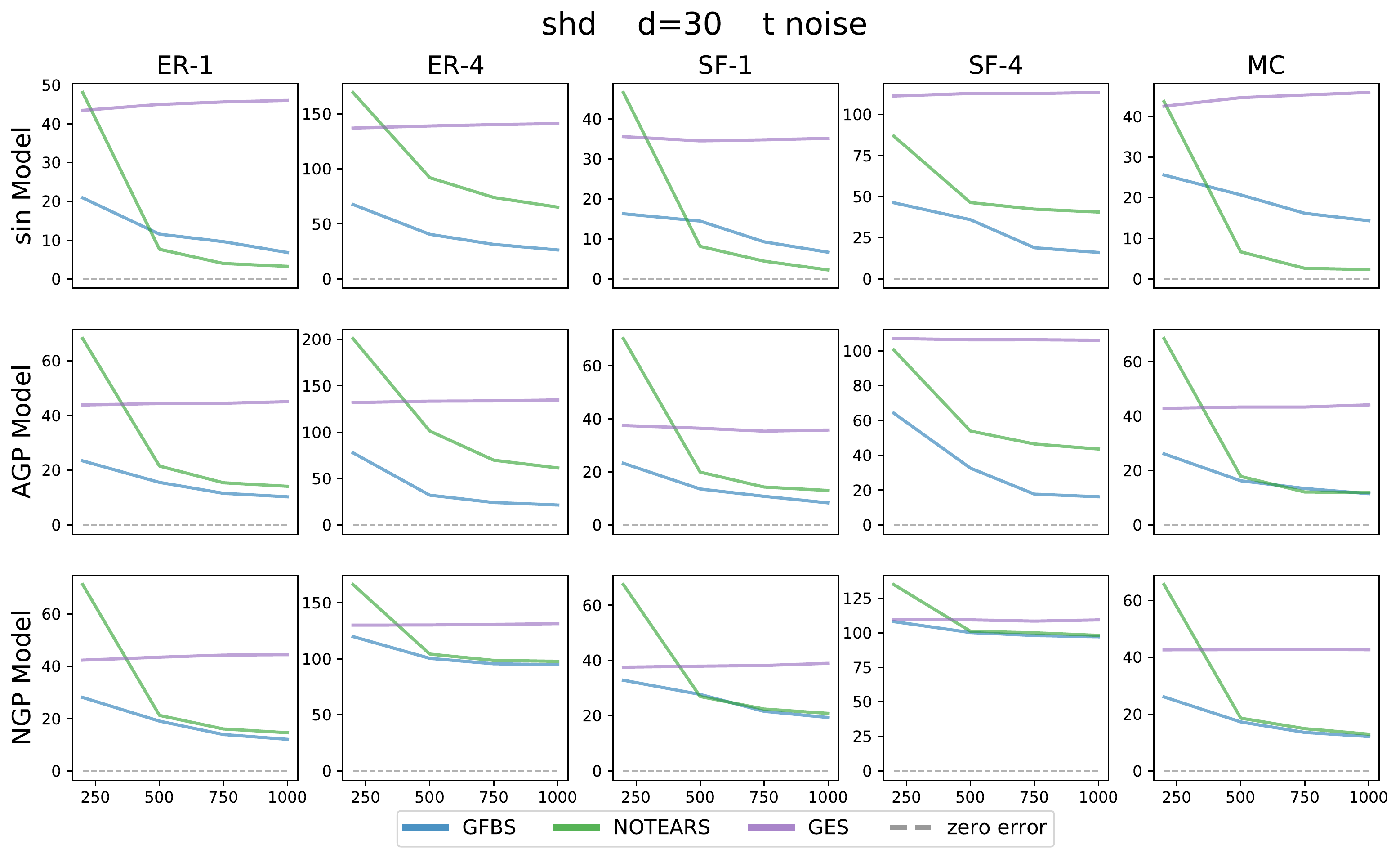}
		\end{subfigure}
		\caption{SHD v.s. sample size for $d=20,30$ and t noise.}
		\label{fig:shd:t}
	\end{figure}

	\begin{figure}[h]
		\centering
		\begin{subfigure}[t]{0.49\textwidth}
			\includegraphics[width=1.\textwidth]{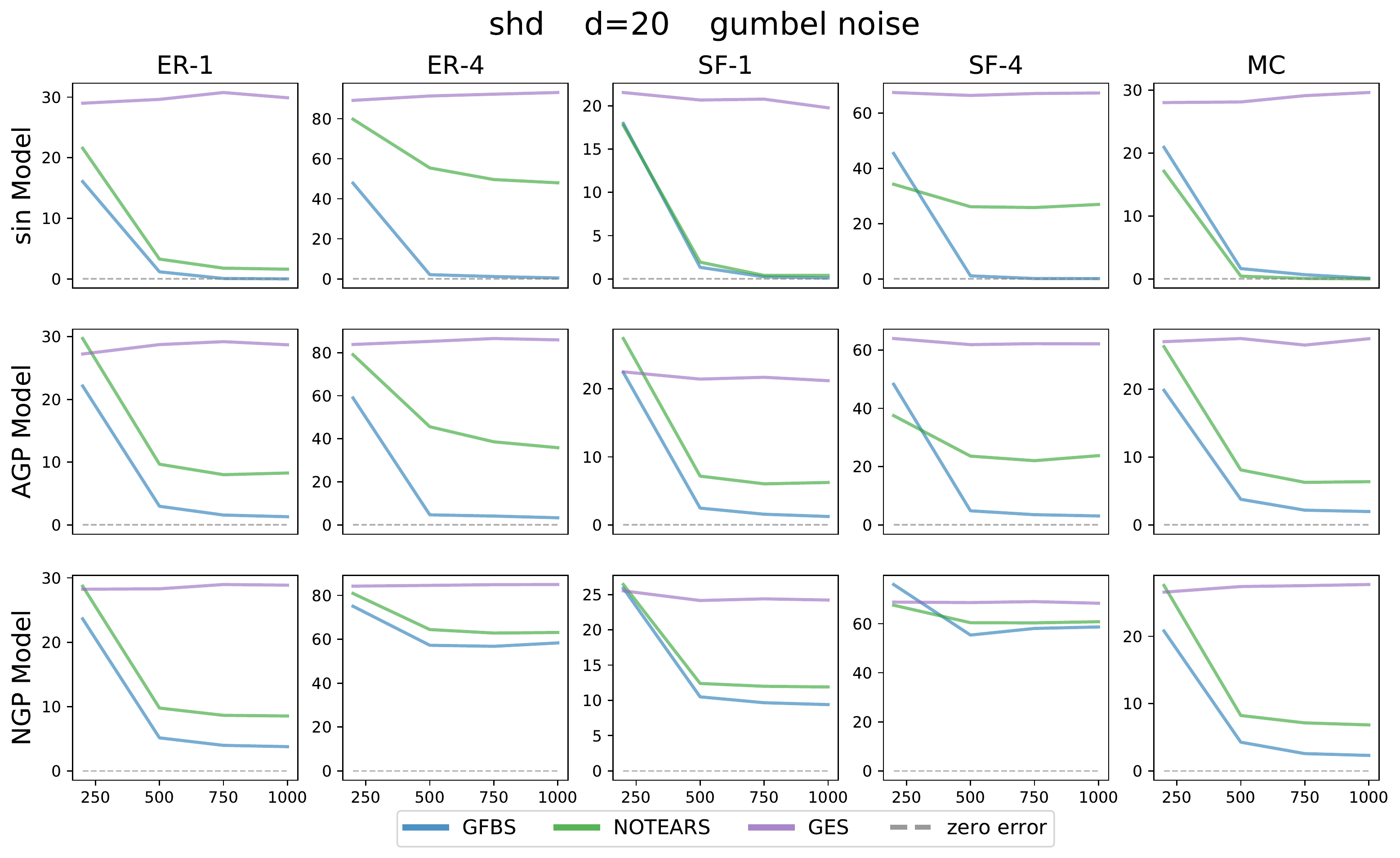}
		\end{subfigure}
		\begin{subfigure}[t]{0.49\textwidth}
			\includegraphics[width=1.\textwidth]{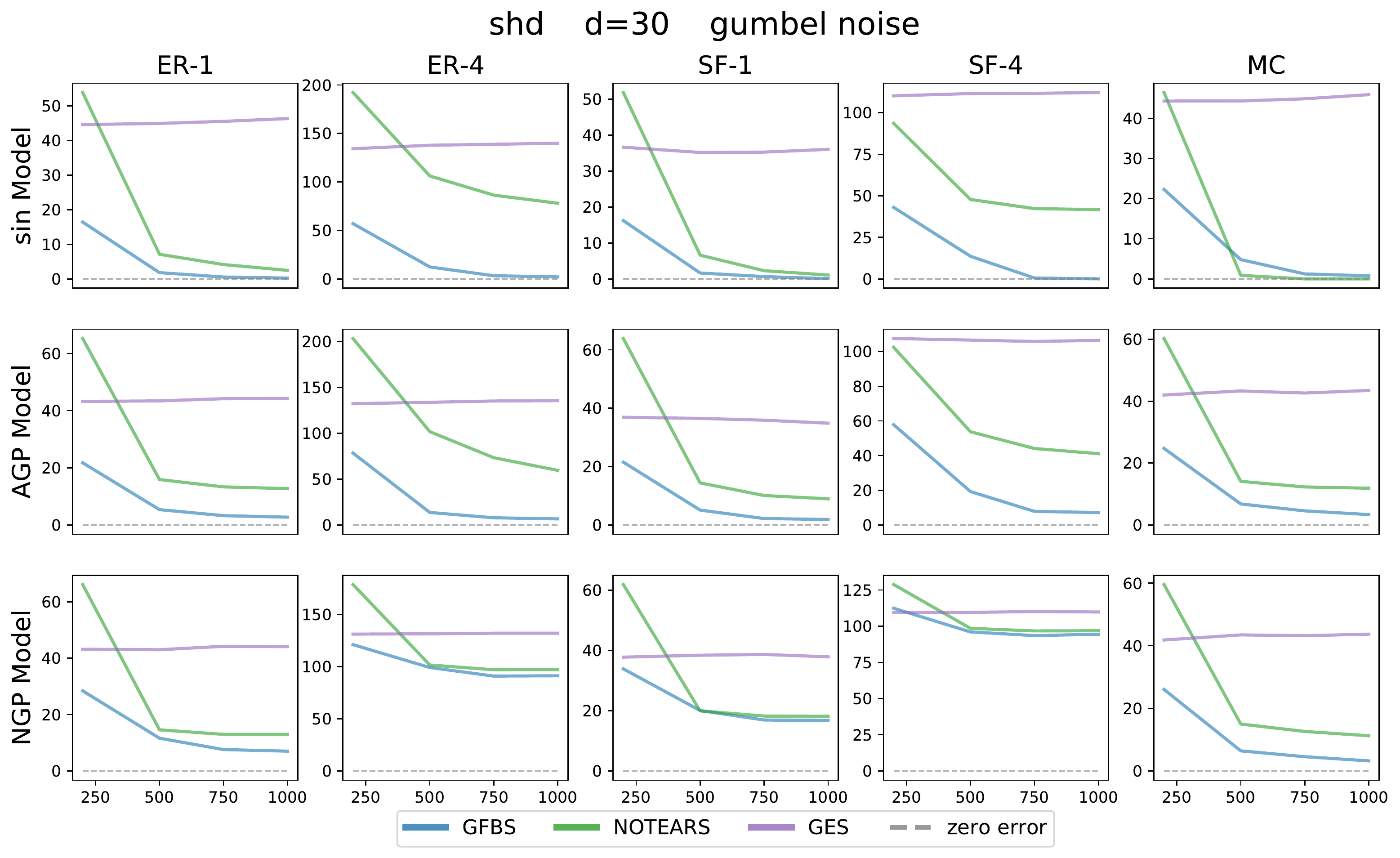}
		\end{subfigure}
		\caption{SHD v.s. sample size for $d=20,30$ and Gumbel noise.}
		\label{fig:shd:gumbel}
	\end{figure}

	\subsubsection{Score optimization}
	We consider the experiment setting under $\phi_1$. Record the estimated score of estimated DAG at each step of iteration ($5$ for $d=5$ in total), starting from empty graph. The different sample size is indicated by darkness of the color. The gray dashed line is the true score ($5$ for $d=5$).
	\begin{itemize}
		\item Figure~\ref{fig:obj_5_gaussian}: Score v.s. iteration for $d=5$ and Gaussian noise
		\item Figure~\ref{fig:obj_5_t}: Score v.s. iteration for $d=5$ and t noise
		\item Figure~\ref{fig:obj_5_gumbel}: Score v.s. iteration for $d=5$ and Gumbel noise
	\end{itemize}

	\begin{figure}[h]
		\centering
		\includegraphics[width=1.\linewidth]{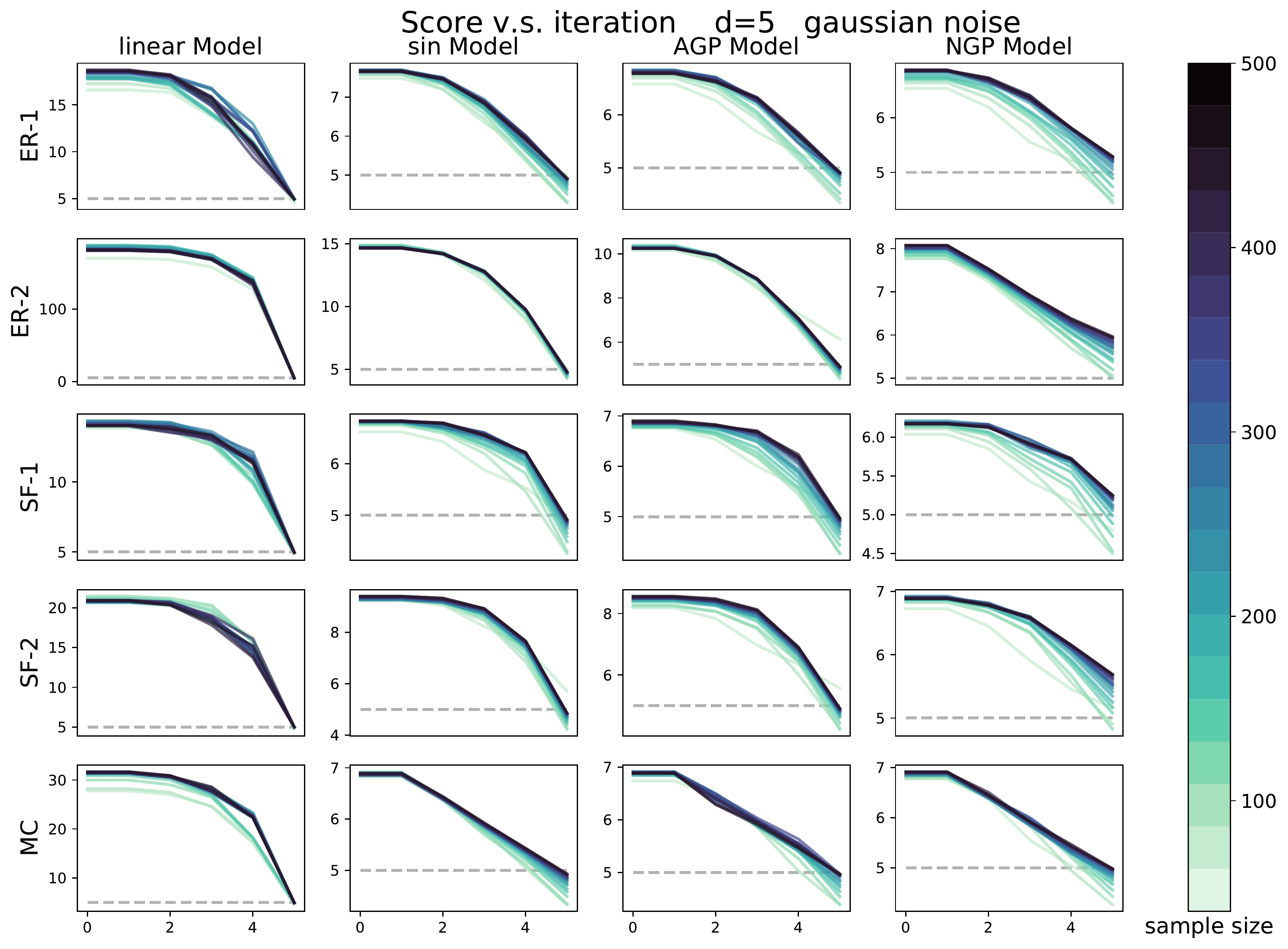}
		\caption{Score v.s. iteration for $d=5$ and Gaussian noise}
		\label{fig:obj_5_gaussian}
	\end{figure}

	\begin{figure}[h]
		\centering
		\includegraphics[width=1.\linewidth]{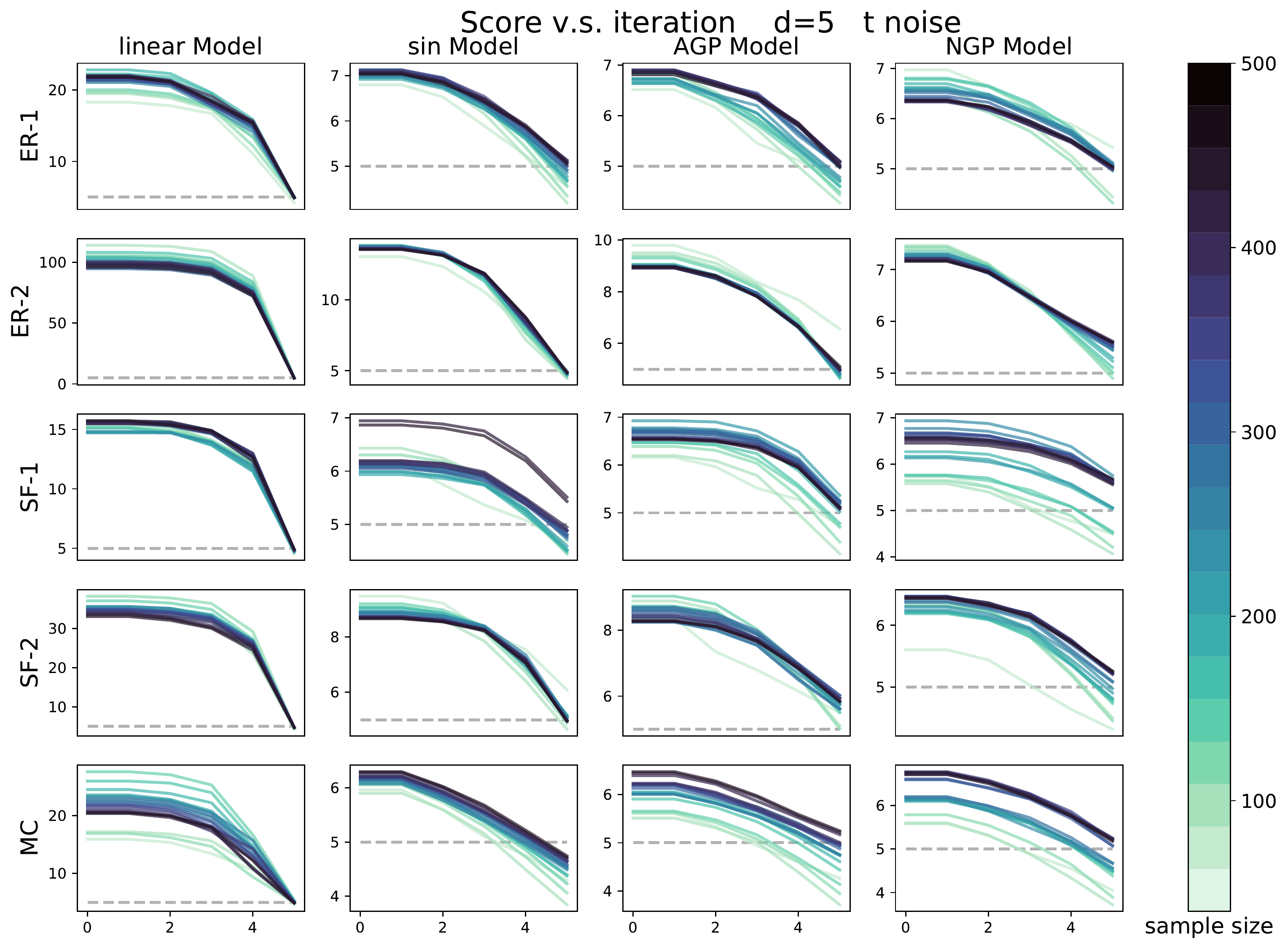}
		\caption{Score v.s. iteration for $d=5$ and t noise}
		\label{fig:obj_5_t}
	\end{figure}

	\begin{figure}[h]
		\centering
		\includegraphics[width=1.\linewidth]{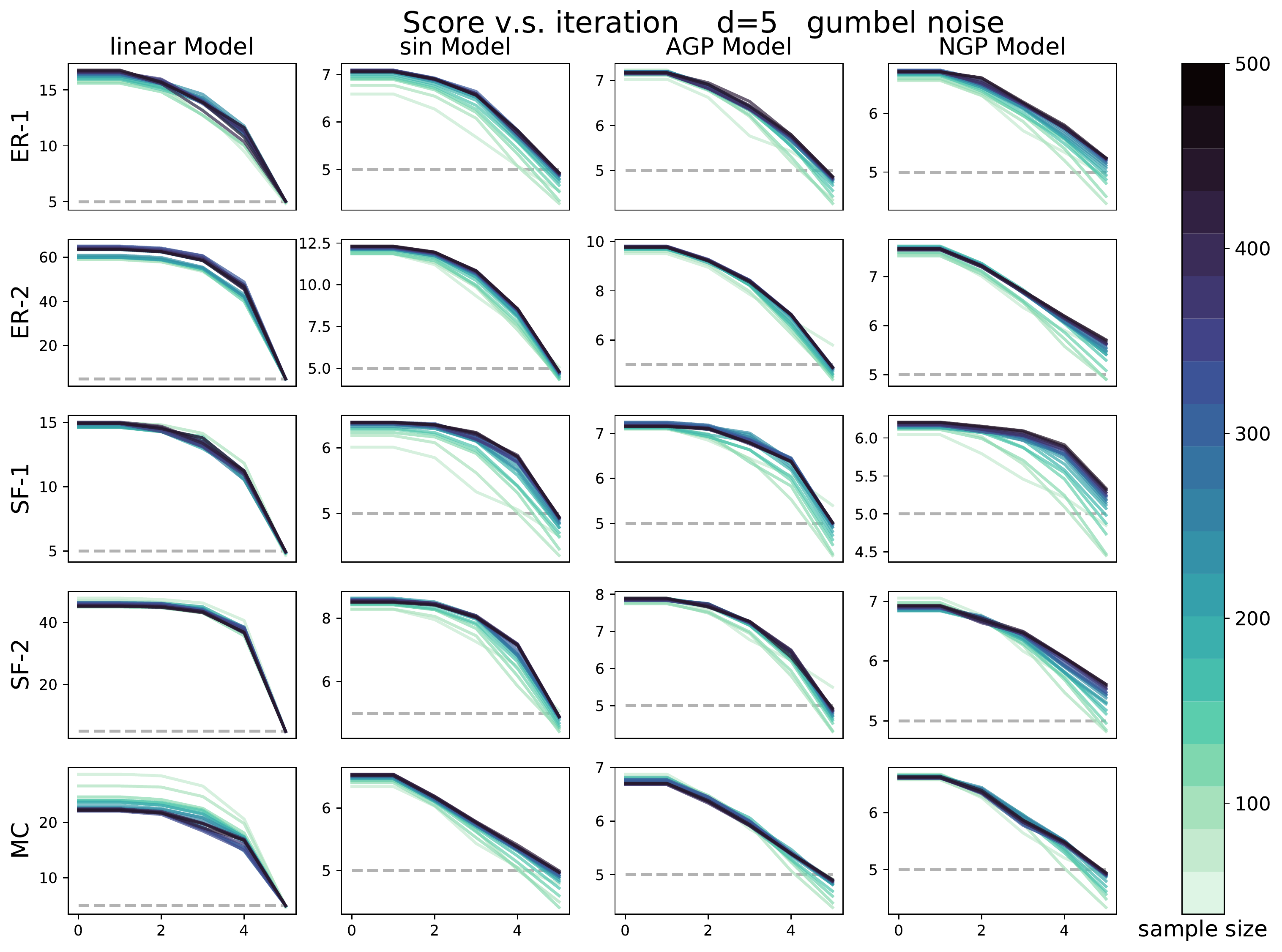}
		\caption{Score v.s. iteration for $d=5$ and Gumbel noise}
		\label{fig:obj_5_gumbel}
	\end{figure}

\end{document}